\newcommand{\nn}{n}
\newcommand{\nm}{m}
\newtheorem*{theorem*}{Theorem}
\newtheorem{theorem}{Theorem}
\numberwithin{theorem}{section}
\numberwithin{corollary}{section}
\theoremstyle{definition}
\newtheorem{definition}{Definition}
\numberwithin{definition}{section}
\numberwithin{example}{section}
\newtheorem{remark}{Remark}
\newtheorem{assumption}{Assumption}
\title{Disentangling Observed Causal Effects from Latent Confounders using Method of Moments}
\author{%
Anqi Liu\textsuperscript{1}* 
  \quad
  Hao Liu\textsuperscript{1}*
   \quad
   Tongxin Li\textsuperscript{1}*
   \quad  
   Saeed Karimi-Bidhendi\textsuperscript{2} \\
   \quad
   \bf{
  Yisong Yue\textsuperscript{1}  
   \quad
  Anima Anandkumar\textsuperscript{1} 
  }\\
  \textsuperscript{1}California Institute of Technology \
  \textsuperscript{2} UC Irvine \\
  \texttt{\{anqiliu,hliu3,tongxin,anima,yyue\}@caltech.edu  \ skarimib@uci.edu }
  % examples of more authors
  % \And
  % Anqi Liu \\
  % Affiliation \\
  % Address \\
  % \texttt{email} \\
  % \AND
  % Coauthor \\
  % Affiliation \\
  % Address \\
  % \texttt{email} \\
  % \And
  % Coauthor \\
  % Affiliation \\
  % Address \\
  % \texttt{email} \\
  % \And
  % Coauthor \\
  % Affiliation \\
  % Address \\
  % \texttt{email} \\
}
\begin{document}

\maketitle

\begin{abstract}
Discovering the complete set of causal relations among a group of variables is a challenging unsupervised learning problem. Often, this challenge is compounded by the fact that there are latent or hidden confounders. When only observational data is available, the problem is ill-posed, i.e. the causal relationships are non-identifiable unless strong modeling assumptions are made. When (soft) interventions are available, we provide guarantees on identifiability and learnability under mild assumptions. We assume a linear structural equation model (SEM) with independent latent factors and directed acyclic graph (DAG) relationships among the observables. Since the latent variable inference is based on independent component analysis (ICA), we call this model SEM-ICA. We use the method of moments principle to establish model identifiability. We develop efficient algorithms based on coupled tensor decomposition with linear constraints to obtain scalable and guaranteed solutions.  Thus, we provide a principled approach to tackling the joint problem of causal discovery and latent variable inference.

%cannot be uniquely identified when there are unobservablable confounders.  In this paper, we use latent variables to model unobservablable confounders in causal discovery in the setting when interventions are possible. Our model consists of both a latent variable component and a causal effect component. They represent the confounding effects between latent variables and the causal relationship between observational variables, respectively. 
% Using both observational and experimental data, we leverage the invariance of the latent variables and its structure under multiple interventions. Under the linearity and non-Gaussian noise constraints, the formulation consist of a structural equation model with latent variables wherein we use a mixing matrix and a causal matrix to represent the confounding effects and the causal relationship between observable variables, respectively. 
%We investigate the intervention strategies and prove the identifiability of the causal effects. We provide an efficient algorithm based on applying method of moments on both observational data and experimental data. We also prove the sample complexity of the causal identification algorithm.
\end{abstract}

\section{Introduction}

\label{sec:prev}
Discovery of causal relationships among a group of  variables lies at the heart of scientific discovery \cite{Pearl2009}. Inferring causes and effects, and separating them from spurious correlations, is central to scientific domains such as medical drug trials, high energy physics, and econometrics.  From a machine learning perspective, discovery of causal effects can lead to more robust, reliable, and interpretable predictions. 

Causal inference aims to learn strength of causal interactions on a known graph, while causal discovery also aims to learn the causal graph (and the parameters) from data. This problem is challenging especially in the presence of hidden or latent confounders, which make the problem ill posed. These are variables that are not directly observed but interact with the observables. These hidden variables cause spurious associations among the observed  variables and make it challenging to differentiate causal relations from spurious correlations. For instance, in drug trials, there are many unobserved factors that affect drug efficacy in a patient (e.g. their lifestyle choices), and not all of that is recorded.
% As another example, if we want to understand consumer behavior under online recommendations, there are many hidden factors that are not recorded. Moreover, privacy laws  further prevent collecting detailed and invasive information about the consumers.  On the other hand, companies may have large-scale information about  consumer and product features,  the history of consumer interactions and have the ability to carry out various experiments or interventions. 
As another example, to do system identification in physical systems, like control systems, not all the contributors to the dynamics are observed. This can be due to the difficulty in data collection or the limitations in data fieldity \cite{shi2019neural}. On the other hand, it is feasible to intervene or manipulate the physical system to record different kinds of data. Using such interventions to learn causal relationships is an interesting problem.
% The question is whether  we can use true underlying causes and effects that drive consumer behavior? 

%Disentangling the causal relations from confounding effects is the key to causal structure discovery.

%Throughout this paper, we use finding causes of consumer behavior in the recommendation system to motivate our modeling choices and assumptions. In order to discover the causes of customer behavior, we have customer features, product features, and recorded consumer behaviors available in the data as observational variables. However, there can be both observable and unobservable variables as direct common causes to the customer or product features and shopping behaviors. For example, we observe that red items are popular among Chinese customers. The season variable indicating the Spring Festival can be either observed or not. If not, we should infer the confounding effect of the season variable would avoid recommending unseasonal items to customers. 

% \aacomment{move figure 1 near summary of contributions}

% In many practical applications of empirical sciences, there exist variables that are unobserved but affect the visible factors in the model. 
%It is often impossible to uniquely identify the underlying causal relationships, when only observational data is available, unless strong modeling assumptions are made. Instead when interventions are available, 

The problem we consider is two-fold. We aim to simultaneously discover the relationships among a group of observed variables, and in addition, discover latent factors and their relationships with the observables. We aim  to solve both causal structure discovery and latent variable inference.

%\vspace{7pt}
%* Equal Contribution

\newpage
\paragraph{Causal Discovery with Interventions} While observational data is easy to obtain, causal relationships cannot always be uniquely identified using only observational data. 
%\cite
Interventions provide essential information for discovering the causal effects. For example, in structural equation models (SEMs)~\cite{Silva:2006:LSL:1248547.1248555}, the act of assigning a variable $X$ to a specific value $x$ allows us to dislodge $X$ from its prior causes and evaluate the expected change of other variables. More generally, softly intervening on a subset of variables induces a new probability distribution over the variables~\cite{Pearl2009}. This has been utilized previously  for estimating causal effects when the confounders are also observed variables~\cite{Pearl2009}. What we consider is the general interventional framework in this paper. Interventions are usually represented using $\mathbf{Do}$ operator. For example, in Figure \ref{fig:do}, $\mathbf{Do(X_1)}$ represents intervention on variable $X_1$. As a consequence, the incoming 
edge from $H$ to $X_1$ will be removed after the intervention.
\begin{wrapfigure}[9]{r}{0.45\textwidth}
% \vspace{-0.2in}
      \includegraphics[height=2.45cm, trim={4cm, 8cm, 9cm, 8cm},clip]{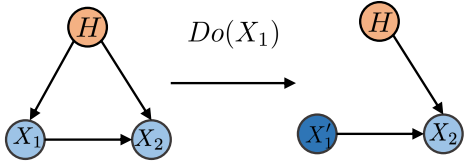}
    \caption{The Do operator. }
    \label{fig:do}
\end{wrapfigure}
% \aacomment{cite}
% \begin{figure*}[t!]
%     \centering
%     \includegraphics[width=0.65\linewidth]{img/intervention.png}
%     \caption{Demonstration of intervention. }
%     \label{fig:sample_complexitiy}
% \end{figure*}

% With the accessibility of large scale observational data in modern machine learning applications, it is tempting to model unobserved confounders using latent variable models. % Although a plenty of work has been done on estimation of causal effects \cite{Hoyer:2008:ECE:1414098.1414341},\cite{10.1007/978-3-540-75256-1_4} using either background knowledge \cite{Chen:2016:IKS:3061053.3061120}, optimal interventions \cite{1606.05027}, etc,

%\vspace{-2mm}
\paragraph{Causal Discovery with hidden confounders}  Previous works that  focus on learning causal models with hidden confounding variables generally consider only  observational data ~\cite{hoyer2009nonlinear, zhang2012identifiability,1908.03932}.  However, these works  do not consider information obtained through interventions and hence, require strong modeling assumptions. Hence, they may not be applicable in many settings.  For instance, the linear, non-Gaussian, and acyclic model (LiNGAM) has been investigated for causal discovery using the independent component analysis (ICA) model under non-gaussian noise assumptions on observational data \cite{shimizu2006linear,shimizu2011directlingam}. It is recently extended to the hidden confounder 
setting\cite{1908.03932}. However, using observational data, it can only recover a class of causal models that are equivalent and compatible with observational data, but this does not guarantee unique identifiability of the causal relationships.

\paragraph{Latent variable models} In terms of latent variable modeling without causal relationships, there is a rich body of work proposing different classes of models and algorithms. We consider the independent component analysis (ICA) which assumes the presence of independent latent factors that mix together to generate the observed data~\cite{Hyvarinen:2000:ICA:351654.351659}. Learning the mixing process provides a natural  disentangled representation of the observed domain ~\cite{khemakhem2019variational}.
% \aacomment{cite NICA and VAE paper by Hyvarinen}. 
The simplest setting is when the mixing process is linear. In this case, the mixing matrix can be recovered provably and efficiently under mild assumptions using tensor decomposition methods and higher order (3rd or 4th) moments \cite{JMLR:v15:anandkumar14b}.
% \aacomment{cite anandkumar}. 

In contrast to tensor methods, standard approaches for learning latent variable models such as expectation maximization (EM) are not guaranteed to converge to the globally-optimal solution and are prone to failure in high dimensions. We consider the method of moments approach that  fits observed moments in order to estimate the model. For a wide class of   latent variable models that includes ICA, \cite{JMLR:v15:anandkumar14b} 
% \aacomment{cite anandkumar 2014 paper} 
showed that low order moments (up to third or fourth) is sufficient to provably learn the model. Efficient tensor decomposition algorithms have been developed that guarantee learning with polynomial sample and computational complexity. 

\paragraph{Our formulation} In this paper, we propose a new model termed as SEM-ICA that  combines the two frameworks, viz., linear SEMs and ICA, to incorporate both observables and latent confounders. We use a mixing matrix to represent the confounding effects between the latent variables and the observable variables. We use a DAG graph to represent the relationship among observable variables.  Thus, we assume a latent generative process, given by the  ICA model, and this acts as latent confounders to the causal discovery process among the observables. This is a reasonable assumption in many scenarios. For example, in power networks, our intervention on different distributors would not change the latent variables such as unrecorded weather or temperature.
% For example, in our recommendation system example, our interventions on recommending different products to different customers would not change the latent variables such as unrecorded user context or demographics.
% \aacomment{if u can come up with something  good keep it otherwise remove it}.

We assume that (soft) interventions are available and this helps disentangle the causal and latent effects. A key observation we make is the fact that there is invariance of the latent effects across different interventions, since we assume that they are driving the generative process of observations. This insight allows us to relate the data obtained  under different interventions. Our method generally allows for both hard interventions and soft interventions. But we focus our analysis and simulation on the hard interventions.

\paragraph{Algorithmic Contributions} We establish a novel algorithm for learning linear SEM-ICA based on the method of moments. We consider moments computed both from observational and interventional data. In particular, each of these datasets can be expressed as another ICA model. Thus, we cannot hope to learn the correct underlying model by considering only observational data. In particular, we need to understand how the different datasets from different interventions are related to one another. We show that the mixing matrices of these models are related to the parameters of the underlying SEM-ICA model through a linear system of equations. We assume that the distribution of the generative latent factors is invariant under interventions. This holds in many cases when environmental conditions being invariant under controlled interventions. For example, in power networks, our intervention would not change the latent variables such as unrecorded weather or temperature.  We then formulate a joint optimization method that incorporates both the tensor decomposition and the linear systems with invariance constraints to obtain recovery of the mixing and observed matrices. 
% Combined with the formulated linear systems, we then establish the learning error guarantees for the whole algorithm.
% Our results show that the error of recovery the observed causal matrices converge quickly to zero.
\vspace{-5pt}

\paragraph{Theoretical Guarantees} We analyze a variant of our algorithm and prove  identifiability for the mixing and observed causal matrices. We demonstrate that under mild non-degeneracy conditions our method can identify both the mixing and observed causal matrices up to permutation, sign and scale. This requires the mixing matrix of the ICA model to be full-rank and the adjacent graph among observables to be directed acyclic graph(DAG). Both are natural for identifiability, which implies that the latent factors are not (linearly) degenerate.
 % Our identifiability is thus up to the permutation that matches latent factors of each ICA model. 
We also establish the learning error guarantees for the whole algorithm and show that we achieve polynomial sample and computational complexity. This is the first work that provides guaranteed causal discovery with hidden confounders.

\vspace{-5pt}
\vspace{-5pt}

%finding all permutations is not feasible and we

% The structure of the paper is as follows. In section \ref{sec:research}, we describe the problem setup, the model, and its notation. We further illustrate our method in section \ref{sec:ica} and \ref{sec:causal_effect}. We provide sample complexity analysis of our method in section \ref{sec:analysis}. Section \ref{sec:related} consist of related work. We conclude the paper and discuss further about future directions in section \ref{sec:conclusion}.
% \input{motivation.tex}
\section{Problem Setup}
\label{sec:research}

\begin{figure}[t]
        \centering
        \setlength{\tabcolsep}{-5pt}
        \begin{tabular}{ccc}
          \includegraphics[height=4cm, trim={7cm, 6cm, 6.5cm, 6cm},clip]{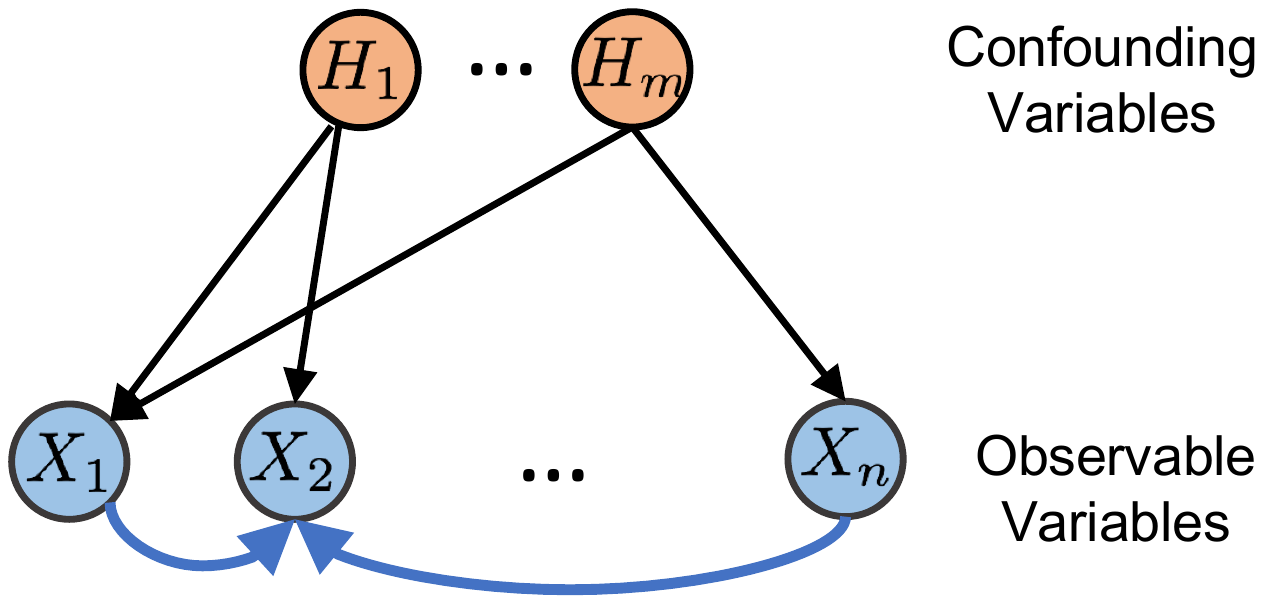}&
           \includegraphics[height=4cm, trim={7cm, 6cm, 10cm, 6cm},clip]{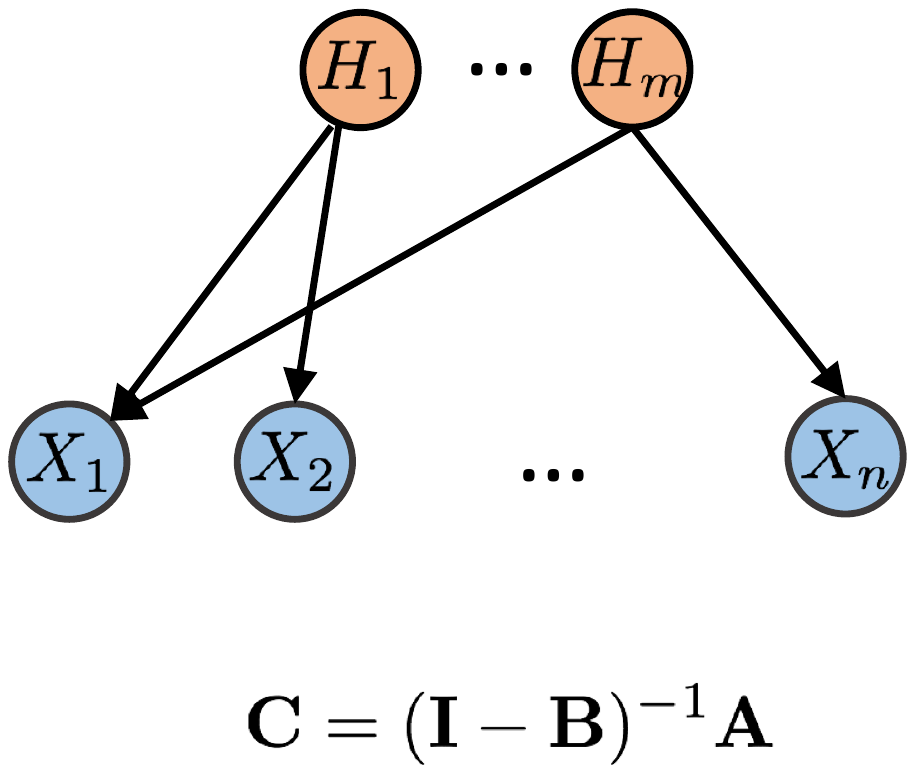}&
           \includegraphics[height=4cm, trim={7cm, 6cm, 10cm, 6cm},clip]{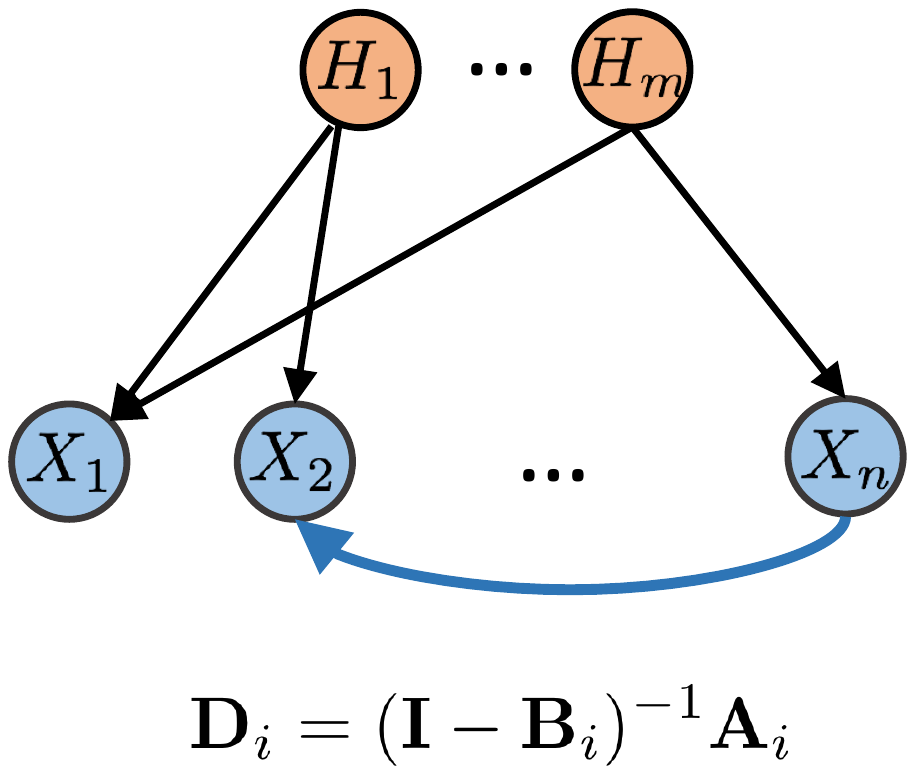} \\
           (a) & (b) & (c)
       \end{tabular}
       
         \caption{(a)  Graphical representation of the SEM-ICA model. The orange-circled nodes are the confounding variables that are mutually independent. The nodes in blue stand for observable variables and they are allowed to have causal relations between each other;
         (b) Graphical representation of the ICA model $\mathbf{X} = \mathbf{C}\mathbf{H}$. The coordinates of $\mathbf{H}$ are independent; (c) Graphical representation of the SEM-ICA model after intervene $i$th variable.}
        \label{fig:ICA}
     
    \end{figure}
% In this section, we formalize the model considered in this work, together with a brief summary of useful techniques. In the sequel, we present our model, which takes the confounding variables into consideration. The model is built upon the standard linear structural equation model~(SEM), with additional structural constraints.

% \subsection{Model and Notation}
\vspace{-5pt}
\vspace{-5pt}
Given a set of $\nn$ \textit{observable}  variables $\mathcal{X}:=\{X_1,\ldots,X_{\nn}\}$ and a set of $\nm$ \textit{confounding}  variables $\mathcal{H}:=\{H_1,\ldots,H_{\nm}\}$, we consider a directed graph $G=(\mathcal{X}\bigcup\mathcal{H},E)$ where $\mathcal{X}\cup\mathcal{H}$ is the vertex set. The edge set $E$ consists of all directed edges between vertices, which represent the causal relations between two neighboring variables. For a vertex $X\in\mathcal{X}$, the parent set of $X$ is denoted by $\mathrm{Pa}(X) \subseteq\mathcal{X}\bigcup\mathcal{H}$ such that $X' \in \mathrm{Pa}(X)$ if and only if there is a direct edge from $X'\in\mathcal{X}\bigcup\mathcal{H}$ to $X$. 
For modeling the causality among confounding variables and observable variables, we assume that the confounding variables are independent and there only exist paths from confounding variables to observable variables, but not the other way around. Moreover, we assume there is no self-loop and the partial graph with vertices in $\mathcal{X}$ is acyclic, whence can be regarded as a DAG. Therefore, the graph $G$ is a quasi-bipartite graph, as shown in Figure~\ref{fig:ICA} (a).

 We fix the notation for this paper. We use boldface letters $\mathbf{X}$ and $\mathbf{H}$ for random vectors and denote the $i$-th entry of $\mathbf{X}$ as $\mathbf{X}_i$. Observed measurements are realizations of $\mathbf{X}$, written as $\mathbf{X_{[1]}},\ldots,\mathbf{X_{[N]}}$ and $N$ is the number of samples. Matrices are denoted by uppercase boldface letters $\mathbf{A}$, $\mathbf{B}$, $\mathbf{C}$ and $\mathbf{D}$. For a matrix $\mathbf{A}$, the $i$-th row is written as $\mathbf{a}_i$ and the $i$-th column is denoted by $\mathbf{a}^{(i)}$. The $(i,j)$-th entry of $\mathbf{A}$ is denoted by $a_{i,j}$. The superscript $T$ refers to the transpose of a matrix or vector.

\subsection{Structural Equation Models with Latent Variables}

The set of observable variables is represented by $\mathbf{X}$ as a vector in $\mathbbm{R}^{\nn}$. The \textit{confounding (hidden) variables} are denoted by a vector $\mathbf{H}$ in $\mathbbm{R}^{\nm}$. We consider a linear model
% \footnote{Our model can be generalized straightforward to the noisy set-up. In particular, if a multivariate Gaussian noise $\mathbf{N}$ is added to the observations $\mathbf{X}$ such that $\mathbf{X}= \mathbf{A}\mathbf{H} + \mathbf{B}\mathbf{X} + \mathbf{N}  = \mathbf{C}\mathbf{H} + (\mathbf{I}-\mathbf{B})^{-1}\mathbf{N}$. Assuming the noise vector $\mathbf{N}\in \mathbbm{R}^\nn$ is a multivariate Gaussian random vector, then $(\mathbf{I}-\mathbf{B})^{-1}\mathbf{N}$ is also a multivariate Gaussian random vector that is independent of $\mathbf{H}$. Therefore, the tensor decomposition method proposed in \cite{JMLR:v15:anandkumar14b} can be applied to recover the matrix $\mathbf{C}$. See Theorem 3.4 in~\cite{JMLR:v15:anandkumar14b} and~\cite{comon2010handbook} for more details.} 
for the observable and confounding variables:
\begin{align}
\label{eq:1}
\mathbf{X} = \mathbf{A}\mathbf{H} + \mathbf{B} \mathbf{X} + \mathbf{N}
\end{align}
wherein $\mathbf{A}\in \mathbbm{R}^{\nn\times\nm}$ denotes the \textit{mixing matrix}. It represents the correlation between observable and confounding variables (and equivalently, the edges between confounding variables and observable variables in Figure \ref{fig:ICA}). Similarly, the matrix $\mathbf{B}\in\mathbbm{R}^{\nn\times\nn}$ is a \textit{observed causal matrix}, which encodes the causal relationship among observable variables (corresponding to directed edges in Figure \ref{fig:ICA}). Additive noise is written as $\mathbf{N}$.
Rearranging the terms, we obtain the following alternative expressions:
\begin{align}
\label{eq:2}
  \mathbf{X} = \mathbf{C}\mathbf{H} +\overline{\mathbf{N}}.
\end{align}
where $\mathbf{C}:=(\mathbf{I}-\mathbf{B})^{-1}\mathbf{A}$ and $\overline{\mathbf{N}}:=(\mathbf{I}-\mathbf{B})^{-1}\mathbf{N}$.
Here, $\mathbf{I}$ is an $\nn\times\nn$ identity matrix. Below we present the assumption of our model.
\begin{assumption}
\label{ass:1}
We assume that $\mathbf{A}$ is a full-rank matrix. We also suppose that the faithfulness assumption holds so that $a_{j,k}\neq 0$ if there is a causal path from $\mathbf{H}_k$ to $\mathbf{X}_j$ and $b_{i,j}\neq 0$ if there is a causal path from $\mathbf{X}_j$ to $\mathbf{X}_i$, for all $i,j=1,\ldots,\nn$ and $k=1,\ldots,\nm$.
\end{assumption}
With the help of $\nn$ interventions, we can discover the causal ordering of $X_1,X_2,...,X_n$ (see~(\ref{causal_order})). Hence after suitable permutation of the rows and columns of $\mathbf{B}$, we can formulate it as strict lower triangular and the matrix $\mathbf{C}$ is full-rank. In this work, we consider a new approach for identifying causal effects, \textit{i.e.,} recovering the mixing matrix $\mathbf{A}$ and observed causal matrix $\mathbf{B}$, by conducting interventions, defined in the next section.

\subsection{Interventions}

An intervention in causal discovery refers to the experiment that sets the values of the variables and isolates them from their causes. 

\begin{definition}
 {(\bf $\mathbf{Do}(\cdot)$ operator)}: We write $\mathbf{Do}(\cdot)$ as the intervention operator. Graphically, each ideal intervention $\mathbf{Do}(\mathbf{X}_i = \lambda_i)$  corresponds to removing all the directed edges from $\mathrm{Pa}(\mathbf{X}_i)$ to $X_i$ and set the coordinate $\mathbf{X}_i$ in $\mathbf{X}$ to be $\lambda_i$. We write $\mathbf{X}_i\in\mathbbm{R}^\nn$ as a vector with the $i$-th entry set to $\lambda_i$.

\end{definition} 
After an intervention $\mathbf{Do}(\mathbf{X}_i = \lambda_i)$, we assume $\mathbf{Y}_{-i} = \mathbf{X}_{-i} | \mathbf{Do}(\mathbf{X}_i = \lambda_i)$ and $Y_i = \lambda_i$,  the SEM defined in (\ref{eq:1}) still holds by modifying the mixing matrix and observed causal matrix as follow:
\begin{align}
\label{eq:intervened_system}
\mathbf{Y}_i = \mathbf{A}_i\mathbf{H} + \mathbf{B}_i\mathbf{Y}_i+ \mathbf{N}
\end{align}
where $\mathbf{A}_i$ and $\mathbf{B}_i$ are formed by setting the $i$-th rows of $\mathbf{A}$ and $\mathbf{B}$ as the all-zero vector and the unit vector $\mathbf{e}_i$. Rearranging the terms, similarly we obtain $\nn$ linear systems by intervening $\mathbf{Do}(\mathbf{X}_i = \lambda_i)$ for $i=1,\ldots,\nn$ and obtain the following alternative expressions:
\begin{align}
\label{eq:3}
  \mathbf{Y}_i = \mathbf{D}_i\mathbf{H}+ \overline{\mathbf{N}}_i, \ \text{ for } \ i=1,\ldots, \nn
\end{align}
where each matrix $\mathbf{D}_i:=(\mathbf{I}-\mathbf{B}_i)^{-1}\mathbf{A}_i\in\mathbbm{R}^{\nn\times\nm}$ is defined as the \textit{response matrix} to the intervention $\mathbf{Do}(\mathbf{X}_i = \lambda_i)$ and $\overline{\mathbf{N}}_i:=(\mathbf{I}-\mathbf{B}_i)^{-1}\mathbf{N}$. 
% In the next section, we will see that the observational data alone is not sufficient to recover the mixing and causal matrices of SEM-LV model. 

The goal of this work is to use both the observational data and the experimental data obtained by interventions $\mathbf{Do}(\mathbf{X}_i = \lambda_i)$ to identify the mixing matrix $\mathbf{A}$ and the observed causal matrix $\mathbf{B}$ in (\ref{eq:1}), under the assumption presented in Assumption~\ref{ass:1}.
In the following section, we discuss independent component analysis ( ICA) models and the associated tensor decomposition methods, that can be utilized for recovering the response matrices $\mathbf{C}, \mathbf{D}_1,\ldots,\mathbf{D}_{\nn}$, based on observed samples of $\mathbf{X}, \mathbf{Y}_{1},...,\mathbf{Y}_{\nn}$.

\subsection{Recovery of Causal Order} 
\label{causal_order}
It is well known that if single-variable interventions are conducted on the variables, causal ordering can be recovered \cite{hyttinen2012learning} \cite{Eberhardt2006}. Intuitively, by doing single-variable interventions, the causal ordering can be recovered by checking the variables that have changed after the interventions. Previous work provides method that can be used to recover the causal directions with $n-1$ interventions \cite{Eberhardt2006}. In our framework, we can recover the causal ordering immediately after getting the experimental data by doing interventions $\mathbf{Do}(\mathbf{X}_i=\lambda_i)$ on the variables $\mathbf{X}_1,\mathbf{X}_2,...,\mathbf{X}_n$. Furthermore, we can also determine matrices $\mathbf{A}$ and $\mathbf{B}$ using the same $n$ single-variable interventions as shown below. 

\section{Tensor Decomposition for ICA}
\label{sec:ica}

\subsection{Independent Component Analysis}

The classical independent component analysis (ICA) model is a well-known blind source separation method for separating a set of source signals from a set of mixed signals, with no or a little prior knowledge about the source signals or mixing process. ICA-based methods decompose a multivariate random signal into a set of independent non-Gaussian signals. Therefore, they can be applied to the linear structural equations in (\ref{eq:2}) and (\ref{eq:3}). Taking the equation in (\ref{eq:2}) as an example, our goal is to recover $\mathbf{C}$ whose columns are defined as \textit{independent components} (ICs), provided with data samples of $\mathbf{X}$. The corresponding graphical representation is depicted in Figure~\ref{fig:ICA}. 
The classical ICA problem is an ill-posed task since it has inevitable indeterminacy of permutation and dilation of the independent components. Therefore, without additional information, we can only recover columns of the matrix $\mathbf{C}$ in equation (\ref{eq:2}) up to permutation and scaling. Under Assumption~\ref{ass:1}, the matrix $\mathbf{C}$ in equation (\ref{eq:2}) is full-rank. Thus, it is identifiable if latent variables is non-Gaussian.

\subsection{Tensor Decomposition}
\label{sec:tensor}
Various techniques have been proposed to recover the factor matrix $\mathbf{C}$ of an ICA model $\mathbf{X}=\mathbf{C}\mathbf{H}+\overline{\mathbf{N}}$. In this work, we use the method in~\cite{JMLR:v15:anandkumar14b} for guaranteed recovery of ICs. This procedure, which we refer to as the \textit{tensor method}, uses higher-order moments of the random vector $\mathbf{X}$ to recover the columns of $\mathbf{D}$. Here, we use the $4$-th order cumulant of zero-mean vector $\mathbf{X}$ defined as:
\begin{align}
\label{eq:tensor}
    \mathbf{M}_4 = \mathbbm{E}\left[\mathbf{X}\otimes\mathbf{X}\otimes\mathbf{X} \otimes\mathbf{X}\right] - T 
\end{align}
where $T$ is the following $4$-th order tensor, $\text{ for } 1\leq i_1,i_2,i_3,i_4 \leq \nm$:
\begin{align*}
[T]_{i_1,i_2,i_3,i_4} =&\mathbbm{E}\left[\mathbf{X}_{i_1}\mathbf{X}_{i_2}\right]\mathbbm{E}\left[\mathbf{X}_{i_3}\mathbf{X}_{i_4}\right]
+\mathbbm{E}\left[\mathbf{X}_{i_1}\mathbf{X}_{i_3}\right]\mathbbm{E}\left[\mathbf{X}_{i_2}\mathbf{X}_{i_4}\right]
+\mathbbm{E}\left[\mathbf{X}_{i_1}\mathbf{X}_{i_4}\right]\mathbbm{E}\left[\mathbf{X}_{i_2}\mathbf{X}_{i_3}\right].
\end{align*}
%For general cases, if the mean of the $X_i$ is non-zero, we replace $\mathbf{H}_i$ by $\mathbf{H}_i-\mathbbm{E}[\mathbf{H}_i]$ in above. 
Denote by $\kappa_i$ the \textit{excess kurtosis}, a measure of non-Gaussianity
defined as
\begin{align}
\nonumber
% \kappa_i := \mathbbm{E}\left[(\mathbf{H}_i-\mathbbm{E}[\mathbf{H}_i])^4] - 3 \mathbb{E}[(\mathbf{H}_i-\mathbb{E}[\mathbf{H}_i])^2\right]^2,
\kappa_i := \mathbbm{E}\left[\mathbf{H}_i^4\right] - 3,
\end{align}
it follows that
% \begin{align}
% \label{eq:cumulant}
$\mathbf{M}_4 = \sum_{i=1}^{\nm}\kappa_i \mathbf{c}^{(i)}\otimes  \mathbf{c}^{(i)}\otimes  \mathbf{c}^{(i)}\otimes  \mathbf{c}^{(i)}$.
% \end{align}
The $4$-th order moment $\mathbf{M}_4$ can be estimated from observed measurements and tensor decomposition algorithms can be applied to recover the columns of $\mathbf{C}$. In particular, the alternating power updates proposed in~\cite{JMLR:v15:anandkumar14b} can be applied to solve the tensor decomposition problem in~(\ref{eq:tensor}).
\vspace{-5pt}
\section{Causal Effects Identification}
\label{sec:causal_effect}

%\subsection{Recovery of mixing and causal matrices}

Based on the causal ordering restored, the matrix $\mathbf{B}$ after suitable permutation of the rows and columns becomes a lower triangular matrix. Therefore, without loss of generality, we assume that $\mathbf{B}$ is strict lower triangular and thus matrix $\mathbf{C}$ is full-rank. 
% For notational convenience we let $\mathbf{A}_0:=\mathbf{A}$, $\mathbf{B}_0:=\mathbf{B}$, $\mathbf{D}_0:=\mathbf{C}$ and $\mathbf{X}_0:=\mathbf{X}$.  
The next step is to recover the mixing matrix $\mathbf{A}$ and observed causal matrix $\mathbf{B}$, from the linear systems:
% \begin{align}
% \label{eq:4.1}
 $(\mathbf{I}-\mathbf{B}_i)\mathbf{D}_i = \mathbf{A}_i, \quad \text{for } i=0,\ldots,\nn$, which is
% \end{align}
derived from (\ref{eq:3}) and (\ref{eq:2}). 

\subsection{Recovery of Response Matrices}
\label{PS_ind}
To obtain matrices $\mathbf{D}_0,\ldots,\mathbf{D}_\nn$ satisfying above, the labels of rows and columns of the matrices should be preserved, \textit{i.e.,} the rows and columns should correspond to the same confounding or observable variables. However, the tensor decomposition method reported in~\cite{JMLR:v15:anandkumar14b} leaves the permutation and scaling ({\textit{PS}}) for columns of any response matrix unknown. We name this issue {\textit {PS indeterminacy}}. Without considreation of time complexity, we can solve this issue in a brute force manner, by trying out different permutations of the hidden vectors since only the original mixing and observed causal matrices make the entire linear system solvable. Hence this issue will not affect the identification of our method. Also, heuristic ways exist for preserving the correspondence between columns of two response matrices $\mathbf{D}_i$ and $\mathbf{D}_j$ in practice. With prior knowledge, adding regularizers can also eliminate indeterminacy in an ICA model (\cite{lu2003eliminating}).

%\footnote{For instance, under the assumption that elements in $\mathbf{H}$ are distinct with each other, we can align the two hidden vectors $\mathbf{H}^{(1)}$ and $\mathbf{H}^{(2)}$, by doing the normalization, setting all the signs to be the same and sorting the two hidden vectors in an increasing order.}

\subsection{Recovery of Mixing and Observed Causal Matrices}

% For robust identification of $\mathbf{A}$ and $\mathbf{B}$, instead of solving the linear systems in (\ref{eq:4.1}), we can find the optimal solutions of the following least square optimization problem:
% \begin{subequations}
% \begin{align}
% \label{eq:opt_1}
% \min_{\mathbf{A},\mathbf{B}} \ \  &  \sum_{i=0}^{\nn} \left\|(\mathbf{I}-\mathbf{B}_i)\mathbf{D}_i-\mathbf{A}_i\right\|_2^2\\
% \text{s.t. } \ \ & \mathrm{rank}\left(\mathbf{A}\right) = \min\{\nm,\nn\},\\
% &\qquad \mathbf{A}\in\mathbbm{R}^{\nn\times\nm},\\
% \label{eq:opt_2}
% &\qquad\mathbf{B}\in\mathbbm{R}^{\nn\times\nn}.
% \end{align}
% \end{subequations}
% This optimization problem can then be solved using standard optimization tools like Gurobi and CVX.
For robust identification of $\mathbf{A}$ and $\mathbf{B}$, instead of solving the linear systems in section (~\ref{sec:causal_effect}), we combine the optimization objective in the tensor methods solving ICA with the linear systems to formulate a joint optimization objective: 
\begin{subequations}
\begin{align}
\label{eq:opt_1_1}
\min_{\mathbf{A},\mathbf{B}, \mathbf{C}, \mathbf{D}_i} \ \  & \left\|(\mathbf{I}-\mathbf{B})\mathbf{C}-\mathbf{A}\right\|_F^2  +  \sum_{i=1}^{\nn} \left\|(\mathbf{I}-\mathbf{B}_i)\mathbf{D}_i-\mathbf{A}_i\right\|_F^2 \\ &+ \| K_4(\mathbf{C}) - \mathbf{M}_4(\mathbf{X}) \|_F^2   + \sum_{i=1}^\nn \| K_4(\mathbf{D}_i) - \mathbf{M}_4(\mathbf{Y}_i) \|_F^2\\
\text{s.t. } \ \ & \mathrm{rank}\left(\mathbf{A}\right) = \min\{\nm,\nn\},
\quad \mathbf{A}\in\mathbbm{R}^{\nn\times\nm},\label{eq:opt_2_1}
% \label{eq:opt_2_1}
\quad\mathbf{B}\in\mathbbm{R}^{\nn\times\nn}.
\end{align}
\end{subequations}
Here, $K_4$ is the Kronecker product. In practice, we use Tensorly library \cite{kossaifi2019tensorly} to jointly optimize $\mathbf{D}_i$ together with $\mathbf{A}$ and $\mathbf{B}$. Specifically, the library utilizes the alternative least square (ALS) method for solving the optimization problems. In practice, we can also add regularization to match columns of $\mathbf{D}_i$ and $\mathbf{C}$ to eliminate the optimization difficulties brought by permutation. We outline the recovery process in Algorithm~\ref{alg:1}.

\begin{algorithm}[h]
%\hrule
%\hrule
\medskip
\KwData{Observed measurements (as copies of $\mathbf{X}$) and model parameters} %$\mathbf{x}^{(1)},\ldots,\mathbf{x}^{(N)}$\;
\KwResult{Estimated mixing and observed causal matrices $\widehat{\mathbf{A}}$ and $\widehat{\mathbf{B}}$}
\medskip
\hrule
\hrule
\medskip
\ {\it Step 1.} Recover the matrix $\mathbf{C}$ (using tensor decomposition methods).

\ {\it Step 2.} Obtain response matrices via interventions:
\For {$i = 1 \ to \ n $}{
 Perform the interventions $\mathbf{Do}$($\mathbf{X}_i$ = $\lambda_i$).
 
 Collect experimental measurements (as copies of $\mathbf{Y}_i$).
 
 Recover the response matrix $\mathbf{D}_i$ (using tensor decomposition methods).
}

\ {\it Step 3.} Recover causal ordering.

% \ {\it Step 4.} Conduct the elimination of \textit{PS} indeterminacy(see Section \ref{PS_ind}).

\ {\it Step 4.} Solve~\eqref{eq:opt_1_1}-\eqref{eq:opt_2_1}.

\Return{\text{Estimated $\widehat{\mathbf{A}}$ and $\widehat{\mathbf{B}}$}}
	\medskip
\caption{Algorithm for recovering mixing and observed causal matrices.}
\label{alg:1}
\end{algorithm}

\vspace{-5pt}
\vspace{-5pt}
\vspace{-5pt}

\section{Recovery Guarantees}
\label{sec:analysis}

With the recovered response matrices, in this section, we analyze the correct recovery of the mixing matrix $\mathbf{A}$ and the observed causal matrix $\mathbf{B}$, with  interventions. All the proofs are in the Appendix. 

%both partial, or full ($\nn$)

Since the optimization in Algorithm \ref{alg:1} is non-convex, we analyze a variant of Algorithm \ref{alg:1}. This variant involves recovering 
each of $\mathbf{C}$ and $\mathbf{D}_i$ separately using tensor methods using Eq. \ref{eq:ica_tensor}, and finding a permutation on columns of $\mathbf{C}$ and $\mathbf{D}_i$ such that is minimizes the linear system \ref{eq:opt_1} - \ref{eq:opt_2}. 
\begin{align}
    \min_{\mathbf{C}} \| K_4(\mathbf{C}) - \mathbf{M}_4(\mathbf{X}) \|_F^2   , \hspace{1em}  \min_{\mathbf{D}_i}   \| K_4(\mathbf{D}_i) - \mathbf{M}_4(\mathbf{Y}_i) \|_F^2 \label{eq:ica_tensor}
\end{align} and using the recovered $\mathbf{C}$ and $\mathbf{D}_i$, we optimize the following
\begin{subequations}
\begin{align}
\label{eq:opt_1}
\min_{\mathbf{A},\mathbf{B}, \text{perm}(\mathbf{C}), \text{perm}(\mathbf{D}_i)} \ \  &  \left\|(\mathbf{I}-\mathbf{B})\mathbf{C}-\mathbf{A}\right\|_F^2 + \sum_{i=1}^{\nn} \left\|(\mathbf{I}-\mathbf{B}_i)\mathbf{D}_i-\mathbf{A}_i\right\|_F^2\\
\text{s.t. } \ \ & \mathrm{rank}\left(\mathbf{A}\right) = \min\{\nm,\nn\},
\qquad \mathbf{A}\in\mathbbm{R}^{\nn\times\nm},\label{eq:opt_2}
\qquad\mathbf{B}\in\mathbbm{R}^{\nn\times\nn},
\end{align}
\end{subequations} where $perm(\cdot)$ represents permutation of the columns. Thus, we find an optimal permutation that minimizes (\ref{eq:opt_2}). The above optimization is computationally more expensive than Algorithm 1 since it requires search over all permutations. But it allows us to provide theoretical guarantees on recovery.

% Before proceeding to the identifiability results, we first consider necessary conditions on the number of interventions for our model.
\vspace{-1mm}
We   now provide theoretical guarantees for this variant. 
\subsection{Recovery with infinite samples }

%\textbf{With partial interventions.} First, we consider the case when only $k\leq\nn+1$ interventions are allowed and denote the set of interventions by $\{\mathcal{I}_i: i\in\mathcal{K}\}$ with $\mathcal{K}=k$. Suppose the values used in the interventions $\lambda_1,\ldots,\lambda_{\nn}\in\mathbbm{R}$ are arbitrary real numbers. 

We first analyze the case when infinite number of samples are available to the algorithm. In this case, the following theorem holds.
\begin{theorem}[Unique recovery with infinite samples]\label{thm:1} There exists a unique permutation of columns of  matrices $D_i$, recovered from tensor decomposition methods, such that $(\mathbf{I}-\mathbf{B}_i)\mathbf{D}_i=\mathbf{A}_i$ for all $i$.
The mixing matrix $\mathbf{A}\in\mathbbm{R}^{n\times m}$ and the observed causal matrix $\mathbf{B}\in\mathbbm{R}^{\nn\times \nn}$ are uniquely recovered under the assumptions that $\mathbf{A}$ has full column rank and $\mathbf{B}$ is strict lower triangular.\end{theorem}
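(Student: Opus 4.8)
The plan is to reduce everything to one structural identity relating each response matrix $\mathbf{D}_i$ to the observational factor matrix $\mathbf{C}$, and then to read off $\mathbf{B}$ (and hence $\mathbf{A}$) from a sequence of one-dimensional null spaces. Write $\mathbf{M} := (\mathbf{I}-\mathbf{B})^{-1}$, which is lower triangular with unit diagonal because $\mathbf{B}$ is strict lower triangular, and let $\mathbf{m}^{(i)}$ denote its $i$-th column. First I would establish the key identity $\mathbf{D}_i = \mathbf{C} - \mathbf{m}^{(i)}\mathbf{c}_i$, so that the difference $\mathbf{D}_i - \mathbf{C} = -\mathbf{m}^{(i)}\mathbf{c}_i$ is rank one. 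This follows by noting that $\mathbf{I}-\mathbf{B}_i$ agrees with $\mathbf{I}-\mathbf{B}$ except that its $i$-th row becomes $\mathbf{e}_i$, i.e. $\mathbf{I}-\mathbf{B}_i = (\mathbf{I}-\mathbf{B}) + \mathbf{e}_i\mathbf{b}_i$, applying the Sherman--Morrison formula (whose correction denominator collapses to $1$ since $\mathbf{b}_i\mathbf{m}^{(i)}=0$, as $\mathbf{b}_i$ is supported on coordinates $<i$ while $\mathbf{m}^{(i)}$ vanishes there), and finally using $\mathbf{a}_i + \mathbf{b}_i\mathbf{C} = \mathbf{c}_i$, which is just the $i$-th row of $(\mathbf{I}-\mathbf{B})\mathbf{C}=\mathbf{A}$. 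The routine bookkeeping of this expansion I would relegate to a preliminary lemma.

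Granting the identity, the recovery of $\mathbf{B}$ is clean and I would carry it out next. For a fixed row index $j$, subtracting the observational relation $(\mathbf{e}_j-\mathbf{b}_j)\mathbf{C} = \mathbf{a}_j$ from the $j$-th row (with $j\neq i$) of the interventional relation $(\mathbf{I}-\mathbf{B}_i)\mathbf{D}_i = \mathbf{A}_i$ yields $(\mathbf{e}_j-\mathbf{b}_j)(\mathbf{D}_i-\mathbf{C}) = \mathbf{0}$, and by the rank-one identity this is equivalent to $\big((\mathbf{e}_j-\mathbf{b}_j)\mathbf{m}^{(i)}\big)\mathbf{c}_i = \mathbf{0}$, hence to $(\mathbf{e}_j-\mathbf{b}_j)\mathbf{m}^{(i)} = 0$ because $\mathbf{c}_i\neq\mathbf{0}$ under faithfulness. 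Thus the row $\mathbf{v} := \mathbf{e}_j-\mathbf{b}_j$ lies in the orthogonal complement of $\mathrm{span}\{\mathbf{m}^{(i)} : i\neq j\}$. Since $\mathbf{M}$ is invertible these $n-1$ columns are linearly independent, so that complement is exactly one-dimensional; normalizing the $j$-th coordinate of $\mathbf{v}$ to $1$ pins down $\mathbf{v}$, and therefore $\mathbf{b}_j$, uniquely. Repeating over all $j$ recovers $\mathbf{B}$, after which $\mathbf{A}=(\mathbf{I}-\mathbf{B})\mathbf{C}$ follows and has full column rank by Assumption~\ref{ass:1}; this simultaneously certifies the $\mathbf{A}_i=(\mathbf{I}-\mathbf{B}_i)\mathbf{D}_i$ asserted in the statement. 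A pleasant by-product I would highlight is that the null-space condition depends only on the \emph{directions} $\mathbf{m}^{(i)}$, so rescaling of columns does not obstruct the recovery of $\mathbf{B}$, consistent with $\mathbf{A}$ being identified only up to per-latent sign and scale.

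For the uniqueness of the aligning permutation I would argue that the rank-one identity singles out the correct column correspondence before any difference $\mathbf{D}_i-\mathbf{C}$ can even be formed. In the correctly aligned indexing the $k$-th columns of $\mathbf{D}_i$ and $\mathbf{C}$ differ by $-c_{i,k}\mathbf{m}^{(i)}$, so all $m$ column differences are parallel to the single direction $\mathbf{m}^{(i)}$ and $\mathbf{D}_i-\mathbf{C}$ has rank one; for a column permutation $\Pi\neq\mathbf{I}$ the difference becomes $\mathbf{C}(\Pi-\mathbf{I}) - \mathbf{m}^{(i)}(\mathbf{c}_i\Pi)$, whose first summand has rank equal to $\mathrm{rank}(\Pi-\mathbf{I})\ge 1$ because $\mathbf{C}$ has full column rank, so the rank exceeds one and the system $(\mathbf{I}-\mathbf{B}_i)\mathbf{D}_i=\mathbf{A}_i$ ceases to admit a common strict-lower-triangular $\mathbf{B}$. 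Because the latent factors $\mathbf{H}$ are invariant across interventions, the correspondence that makes every $\mathbf{D}_i-\mathbf{C}$ rank one must be the same latent ordering for all $i$, which yields the unique permutation. I expect this final step to be the main obstacle: ruling out \emph{every} nontrivial permutation (not merely transpositions) together with the accompanying column rescalings requires the non-degeneracy in Assumption~\ref{ass:1} (full column rank of $\mathbf{A}$ and faithfulness, ensuring no latent direction is accidentally reproduced by a mixture of the others), whereas once the alignment is fixed the recovery of $\mathbf{A}$ and $\mathbf{B}$ in the second paragraph is essentially forced.
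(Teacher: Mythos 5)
Your central identity $\mathbf{D}_i = \mathbf{C} - \mathbf{m}^{(i)}\mathbf{c}_i$ is exactly the rank-one structure on which the paper's proof rests: there, subtracting $(\mathbf{I}-\mathbf{B}_i)\mathbf{D}_i=\mathbf{A}_i$ from $(\mathbf{I}-\mathbf{B})\mathbf{C}=\mathbf{A}$ gives $\mathbf{C}-\mathbf{D}_i=(\mathbf{I}-\mathbf{B})^{-1}\underline{\mathbf{A}}_i$ with $\underline{\mathbf{A}}_i$ supported on a single row, hence $\mathbf{C}-\mathbf{D}_i=\mathbf{g}^{(i)}\underline{\mathbf{a}}_i^{T}$; your Sherman--Morrison computation merely makes the row factor explicit as $\mathbf{c}_i$. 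Where you genuinely diverge is in extracting uniqueness of $(\mathbf{A},\mathbf{B})$. The paper recovers $\mathbf{g}^{(i)}$ by an SVD of $\mathbf{C}-\mathbf{D}_i$ and then argues that any alternative $(\mathbf{A}',\mathbf{B}')$ would be related to the truth by an orthogonal matrix $\mathbf{U}$ that cannot be compatible with all $\nn$ intervened systems simultaneously; that step is asserted rather than carried out (and the same $\mathbf{U}$ is used in both an $\nm\times\nm$ and an $\nn\times\nn$ role). Your row-by-row argument, that $\mathbf{e}_j-\mathbf{b}_j$ must annihilate the $(\nn-1)$-dimensional span of $\{\mathbf{m}^{(i)}: i\neq j\}$ and is then pinned down by normalizing its $j$-th coordinate, is a cleaner and more verifiable route to the same conclusion, and it correctly isolates why only the directions of the $\mathbf{m}^{(i)}$ matter, so column rescalings do not obstruct recovery of $\mathbf{B}$. (Both you and the paper implicitly need $\mathbf{c}_i\neq 0$ for every $i$, i.e., every observable has some latent ancestor; faithfulness as stated does not guarantee this.)

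The genuine gap is the permutation-uniqueness step, and your own hedging there is warranted. From $\mathrm{rank}\bigl(\mathbf{C}(\Pi-\mathbf{I})\bigr)=\mathrm{rank}(\Pi-\mathbf{I})\geq 1$ you cannot conclude that $\mathbf{C}(\Pi-\mathbf{I})-\mathbf{m}^{(i)}(\mathbf{c}_i\Pi)$ has rank greater than one: for a transposition $\mathrm{rank}(\Pi-\mathbf{I})=1$, and a difference of two rank-one matrices can have rank one or even zero. Concretely, if the two swapped columns of $\mathbf{A}$ differ by a multiple of $\mathbf{e}_i$, then the corresponding columns of $\mathbf{C}$ differ by a multiple of $\mathbf{m}^{(i)}$ and the misaligned difference remains rank one; excluding this requires a genericity condition beyond full column rank of $\mathbf{A}$, and even when the difference stays rank one you would still have to show that no alternative strictly lower triangular $\mathbf{B}'$ is consistent with it. To be fair, the paper's appendix does not close this step either---the existence and uniqueness of the aligning permutation appears in the theorem statement but is only gestured at in the proof---so this is a shared weakness rather than one specific to your writeup. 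A plausible repair is to characterize the correct alignment by the support pattern of the common direction of the column differences (zero above coordinate $i$, nonzero at coordinate $i$), which is forced by the causal order, and to show a nontrivial permutation violates it for at least one $i$.
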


\subsection{Recovery with finite samples}

We analyze the number of data samples, denoted by $N$, required in Algorithm~\ref{alg:1}. Suppose we have no prior information and the subroutine of tensor method is performed in an unsupervised manner\footnote{Similar argument holds for the semi-supervised ICA learning set-up in~\cite{pmlr-v28-anandkumar13}.}. To be consistent with the settings of Algorithm 1 in~\cite{JMLR:v15:anandkumar14b}, the $4$-th order moments $\mathbf{M
}_4$ with respect to the response matrices $\mathbf{C},\mathbf{D}_1,\ldots,\mathbf{D}_{\nn}$ are estimated empirically from $N$ observed measurements. We assume the following conditions for the theorem to hold:
%\noindent (1). Rank condition: $\nm$ is at most $O(\nn)$.

%\noindent (2). Matrices $\mathbf{C},\mathbf{D_1},\ldots,\mathbf{D}_{\nn}$  are selected uniformly i.i.d from a unit $\nn$-dimensional sphere.
\noindent (1). Matrix $\mathbf{A} $ has full column rank and $\mathbf{B}$ is strict lower triangular.
\noindent (2). The entries of $\mathbf{H}$ are independent sub-Gaussian variables with $\smash{\mathbbm{E}[H_j^2]=1}$ and the $4$-th order cumulant $\textbf{M}_4$  is non-zero.
\noindent (3). The number of samples satisfies $N\geq \widetilde{\Omega}\left(\nn\nm^6\right)$.
\noindent (4). The initialization is performed by 4-th order generalization of SVD-based technique, with the number of initializations as $\smash{L \geq \nm^{\Omega(\nm^2/\nn^2)}}$.
%\noindent (6). The signs of $\mathbf{A}$ and $\mathbf{B}$ are known.

% \begin{enumerate}
%     \item Rank condition: $\nm=\Theta(\nn)$.
%     \item The columns of $\mathbf{D}_0,\ldots,\mathbf{D}_{\nn}$ are selected uniformly i.i.d from a unit $\nn$-dimensional sphere.
%     \item The entries of $\mathbf{H}$ are independent sub-Gaussian variables with $\mathbbm{E}\left[H_j^2\right]=1$ and the $4$-th order cumulant $M_4$ defined in (\ref{eq:cumulant}) is non-zero.
%     \item The number of samples satisfies $N\geq \widetilde{\Omega}\left(\nn\nm^6\right)$.
%     \item The initialization is performed by 4-th order generalization of SVD-based technique, with the number of initializations as: $L \geq \nm^{\Omega(\nm^2/\nn^2)}$.
%     \item The signs of the original matrices $\mathbf{A}$ and $\mathbf{B}$ are known.
% \end{enumerate}
Consider the estimated matrices $\widehat{\mathbf{A}}$ and $\widehat{\mathbf{B}}$. Let $\widehat{\mathbf{G}}:=(\mathbf{I}-\widehat{\mathbf{B}})^{-1}$. The next theorem states that given the conditions in above, the $\infty$-norm error for the recovered rows of $\widehat{\mathbf{A}}$ and columns of $\widehat{\mathbf{G}}$ is bounded from above.
\begin{theorem}[Sample Complexity]
Suppose the above conditions (1)-(3) hold. Then the rows of the estimated mixing matrix $\widehat{\mathbf{A}}$ and the columns of the estimated observed causal matrix $\widehat{\mathbf{G}}$ given by Algorithm~\ref{alg:1} satisfy that for $i=1,\ldots,\nn$, with high probability, 
\begin{align*}
&\left|\left|\mathbf{a}_i-\widehat{\mathbf{a}}_i\right|\right|_{\infty}\leq O\left(\frac{\nm^2\max\{\nn,\nm\}^3}{\sqrt{\nm}\min\{N,\sqrt{\nn^3 N}\}}\right)
\qquad \left|\left|\mathbf{g}^{(i)}-\widehat{\mathbf{g}}^{(i)}\right|\right|_{\infty}\leq O\left(\frac{\nm^2\max\{\nn,\nm\}^3}{\sqrt{\nn}\min\{N,\sqrt{\nn^3 N}\}}\right),
\end{align*}
\label{thm_2}when sign ambiguity and column permutation are taken into account.
\end{theorem}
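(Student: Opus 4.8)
The plan is to prove the sample-complexity bound in two stages: first control the per-matrix tensor-decomposition error for each of $\mathbf{C},\mathbf{D}_1,\ldots,\mathbf{D}_{\nn}$, and then propagate that error through the linear system $(\mathbf{I}-\mathbf{B}_i)\mathbf{D}_i=\mathbf{A}_i$ to obtain bounds on $\widehat{\mathbf{A}}$ and $\widehat{\mathbf{G}}=(\mathbf{I}-\widehat{\mathbf{B}})^{-1}$. First I would invoke the perturbation analysis of the tensor power method from~\cite{JMLR:v15:anandkumar14b}: under conditions (1)-(4), with $N\geq\widetilde{\Omega}(\nn\nm^6)$ samples and $L\geq\nm^{\Omega(\nm^2/\nn^2)}$ initializations, the empirical cumulant $\widehat{\mathbf{M}}_4$ concentrates around the population $\mathbf{M}_4$, and each recovered column $\widehat{\mathbf{c}}^{(i)}$ (resp. $\widehat{\mathbf{d}}_i^{(j)}$) satisfies a column-wise $\ell_2$ bound of the form $\|\mathbf{c}^{(i)}-\widehat{\mathbf{c}}^{(i)}\|\leq\varepsilon$, up to sign and permutation, where $\varepsilon$ scales with the spectral norm of the empirical fourth-order tensor perturbation and inversely with $\sqrt{N}$ and the minimum excess kurtosis. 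The quantity $\min\{N,\sqrt{\nn^3 N}\}$ in the statement should emerge from combining the whitening error and the power-iteration error in that reference, so I would track the dependence on $\nn,\nm$ through the whitening step (which involves the $\nm$-dimensional second-moment matrix embedded in $\nn$ dimensions) carefully.

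The second stage converts these column-wise spectral/$\ell_2$ bounds into the claimed $\infty$-norm bounds on rows of $\widehat{\mathbf{A}}$ and columns of $\widehat{\mathbf{G}}$. Here I would use Theorem~\ref{thm:1}, which guarantees that in the infinite-sample limit the linear system $(\mathbf{I}-\mathbf{B}_i)\mathbf{D}_i=\mathbf{A}_i$ has a unique consistent permutation and unique solution $(\mathbf{A},\mathbf{B})$; by a continuity/stability argument the finite-sample optimizer of~(\ref{eq:opt_1})-(\ref{eq:opt_2}) selects the same permutation with high probability once $\varepsilon$ is below the separation margin of the infinite-sample system. Given the correct permutation, $\mathbf{B}$ (hence $\mathbf{G}=(\mathbf{I}-\mathbf{B})^{-1}$) and $\mathbf{A}$ are determined by solving the (strict lower triangular) linear system against the perturbed $\widehat{\mathbf{D}}_i$; I would bound the solution error by the conditioning of that system, using that $\mathbf{B}$ strict lower triangular makes $\mathbf{I}-\mathbf{B}$ unit-triangular and invertible with controlled condition number. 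Converting from the $\ell_2$ column error $\varepsilon$ on the $\mathbf{D}_i$ to an $\infty$-norm error on the recovered rows and columns accounts for the differing $\sqrt{\nm}$ versus $\sqrt{\nn}$ normalizations appearing in the two displayed bounds.

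I expect the main obstacle to be the propagation step rather than the tensor-decomposition step, since the latter can be cited essentially verbatim from~\cite{JMLR:v15:anandkumar14b}. The difficulty is twofold: establishing that the correct permutation is recovered with high probability (a discrete, combinatorial event that must be controlled by the margin of the infinite-sample linear system, which requires a quantitative non-degeneracy argument beyond the qualitative uniqueness of Theorem~\ref{thm:1}), and then tracking the polynomial factors $\nm^2\max\{\nn,\nm\}^3$ through the inversion of $\mathbf{I}-\mathbf{B}_i$ across all $\nn$ interventions simultaneously. Because the $\nn$ intervened systems share the same underlying $\mathbf{A},\mathbf{B}$ but differ by zeroing the $i$-th row, I would argue that the errors combine additively across interventions, and a union bound over the $\nn+1$ tensor decompositions (each failing with low probability) yields the final high-probability statement; getting the exact exponents on $\nn$ and $\nm$ to match the claimed rate is where the bulk of the careful bookkeeping will lie.
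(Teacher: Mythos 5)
Your first stage matches the paper: the proof also begins by citing the tensor-method perturbation guarantee (Theorem 7 of Anandkumar et al.) to get a column-wise distance bound $\mathrm{dist}(\mathbf{d}_{i,j},\widehat{\mathbf{d}}_{i,j})\leq\varepsilon(\nm,\nn,N)$ with $\varepsilon=\widetilde{O}\bigl(\nm^2/\min\{N,\sqrt{\nn^3N}\}\bigr)+\widetilde{O}\bigl(\sqrt{\nm}/\nn^{3/2}\bigr)$, converts it to an $\ell_2$ bound, and combines the $\nn+1$ decompositions by the triangle inequality. Where your plan diverges — and where it has a genuine gap — is the second stage. You propose to recover $\mathbf{A}$ and $\mathbf{G}$ by solving the perturbed triangular linear system $(\mathbf{I}-\mathbf{B}_i)\widehat{\mathbf{D}}_i=\mathbf{A}_i$ and controlling the conditioning of $\mathbf{I}-\mathbf{B}$. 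The paper does something structurally different: it reuses the identity from the proof of Theorem~\ref{thm:1} that each difference matrix $\mathbf{F}_i:=\mathbf{C}-\mathbf{D}_i=(\mathbf{I}-\mathbf{B})^{-1}\underline{\mathbf{A}}_i$ is \emph{rank one}, with left singular vector proportional to the target column $\mathbf{g}^{(i)}$ and right singular vector proportional to the target row $\mathbf{a}_i$. The finite-sample bound is then an entrywise (i.e.\ $\ell_\infty$) singular-vector perturbation bound for a perturbed rank-one matrix (Theorem 1.1 of Fan--Wang--Zhong), applied to $\widehat{\mathbf{F}}_i$. This is precisely what produces the $1/\sqrt{\nm}$ versus $1/\sqrt{\nn}$ asymmetry (the two singular vectors live in $\mathbbm{R}^{\nm}$ and $\mathbbm{R}^{\nn}$ respectively) and the $\max\{\nn,\nm\}$-type prefactors. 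Your "track the conditioning of the triangular system" route does not obviously reproduce these rates, requires control of $(\mathbf{I}-\widehat{\mathbf{B}})^{-1}$ which you do not have a priori, and you yourself defer the entire exponent bookkeeping; without the rank-one/SVD observation the argument as sketched does not close.

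One further remark: your concern about establishing that the correct column permutation is selected with high probability (a combinatorial event needing a quantitative margin beyond the qualitative uniqueness of Theorem~\ref{thm:1}) is legitimate, and it is a point the paper's own proof does not resolve either — the theorem statement simply stipulates that the bounds hold "when sign ambiguity and column permutation are taken into account." So that part of your plan identifies a real difficulty, but it is not one whose resolution you can be expected to match against the paper.
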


\vspace{-10pt}

% For poster

% \begin{theorem*}
% Under certain conditions and assuming the number of samples satisfies $N\geq \widetilde{\Omega}\left(\nn\nm^6\right)$, the rows of the estimated mixing matrix $\widehat{\mathbf{A}}$ and the columns of the estimated causal matrix $\widehat{\mathbf{G}}$ given by Algorithm~\ref{alg:1} satisfy that for $i=1,\ldots,\nn$, with high probability, 
% \begin{align*}
% &\left|\left|\mathbf{a}_i-\widehat{\mathbf{a}}_i\right|\right|_{\infty}\leq O\left(\frac{\nm^2\max\{\nn,\nm\}^3}{\sqrt{\nm}\min\{N,\sqrt{\nn^3 N}\}}\right),\\
% &\left|\left|\mathbf{g}^{(i)}-\widehat{\mathbf{g}}^{(i)}\right|\right|_{\infty}\leq O\left(\frac{\nm^2\max\{\nn,\nm\}^3}{\sqrt{\nn}\min\{N,\sqrt{\nn^3 N}\}}\right).
% \end{align*}
% \label{thm_2}
% \end{theorem*}

% The proof can be found in Appendix \ref{append_b}.

\subsection{Necessary Conditions on Interventions}

Consider interventions $\mathtt{Do}(X_i = \lambda_i)$ on the $\nn$ variables $X_1,\ldots,X_{\nn}$ denoted by  $\{\mathcal{I}_i: i\in \{0\}\cup\{1,\ldots,\nn\}\}$ where $\mathcal{I}_0$ is the null intervention of passive observations. The following necessary conditions on the number of interventions hold.

\begin{theorem}[Necessary conditions; SEM]
\label{thm:necessary}
There exists a graph $G=\left(\mathcal{X}\cup\mathcal{H},E\right)$ such that any collection interventions  $\{\mathcal{I}_i: i\in \mathcal{S}\}$ with $\max\{|\mathcal{S}|,m\}\leq \nn-1$ cannot recover its edge set $E$.
\end{theorem}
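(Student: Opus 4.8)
The plan is to establish Theorem~\ref{thm:necessary} by an explicit counting argument combined with a concrete lower-bound construction, showing that fewer than $\nn-1$ informative interventions leave at least one edge of the graph underdetermined. First I would set up the degrees of freedom in the problem: the full model is specified by the mixing matrix $\mathbf{A}\in\mathbbm{R}^{\nn\times\nm}$ and the strict lower-triangular causal matrix $\mathbf{B}\in\mathbbm{R}^{\nn\times\nn}$, so there are roughly $\nn\nm + \binom{\nn}{2}$ free parameters to recover. Each intervention $\mathcal{I}_i$ yields, through ICA/tensor decomposition, a single response matrix ($\mathbf{C}$ for $\mathcal{I}_0$, or $\mathbf{D}_i$ for a $\mathbf{Do}(\mathbf{X}_i=\lambda_i)$), and by~(\ref{eq:3}) the identity $(\mathbf{I}-\mathbf{B}_i)\mathbf{D}_i=\mathbf{A}_i$ means the $i$-th intervention zeroes out the $i$-th row constraint (the intervened variable is severed from its parents) and supplies information only about the remaining rows. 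The key quantitative point is that a single-variable intervention on $\mathbf{X}_i$ can reveal the edges incident to $\mathbf{X}_i$ but cannot by itself disentangle edges among the variables it does not touch.

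Next I would make the information-counting precise by a symmetry/indistinguishability argument. The idea is to exhibit two distinct graphs $G$ and $G'$ (equivalently two distinct parameter pairs $(\mathbf{A},\mathbf{B})$ and $(\mathbf{A}',\mathbf{B}')$) that produce identical response matrices for every intervention in $\mathcal{S}$ whenever $|\mathcal{S}|\leq \nn-1$. Concretely, for a collection $\mathcal{S}$ omitting some index, say index $\nn$, there is freedom in the last structural equation that is never probed: the row of $\mathbf{A}$ and the sub-diagonal row of $\mathbf{B}$ corresponding to the unintervened variable enter only through response matrices we do not observe, or enter in a way that an adversary can compensate for by a matching change. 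I would make this rigorous by writing the linear map from $(\mathbf{A},\mathbf{B})$ to the observed tuple $\{\mathbf{D}_i : i\in\mathcal{S}\}$ and showing its kernel is nontrivial when $|\mathcal{S}|\leq \nn-1$, so two parameter settings collapse to the same observations. The condition $\max\{|\mathcal{S}|,m\}\leq \nn-1$ also handles the confounder side: if $m$ is too small relative to $\nn$, the mixing matrix cannot carry enough independent columns to identify all confounder-to-observable edges, and I would fold this into the same rank-deficiency count.

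The cleanest route for the construction is to build a graph on a single \emph{chain} or on a pair of candidate edges and show that without the intervention that severs the relevant variable, an observed spurious correlation is consistent with two different edge sets. I would pick $G$ to have an edge $\mathbf{X}_j\to\mathbf{X}_k$ (or a confounder edge) that is confoundable with a latent path, and argue that the missing intervention is exactly the one needed to break the confounding, invoking Assumption~\ref{ass:1} (faithfulness and full rank) only to ensure the construction is itself a valid nondegenerate instance of the model rather than an edge case. The main obstacle I anticipate is handling the cumulant/ICA layer honestly: the observations are not the $\mathbf{D}_i$ directly but their columns up to permutation and scaling (the \emph{PS indeterminacy} of Section~\ref{PS_ind}), so I must verify that the two candidate models agree not merely on the response matrices but on the full fourth-order moment tensors $\mathbf{M}_4(\mathbf{Y}_i)$ that the algorithm actually sees, and that the invariance of the latent distribution does not accidentally break the indistinguishability. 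I would address this by choosing the two models to share the same latent kurtoses $\kappa_i$ and to differ only in a direction lying in the unobserved kernel, so that every accessible moment tensor coincides, thereby certifying that no algorithm using only $\{\mathcal{I}_i:i\in\mathcal{S}\}$ can recover $E$.
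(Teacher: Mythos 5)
Your high-level strategy---exhibiting two models $(\mathbf{A},\mathbf{B})$ and $(\mathbf{A}',\mathbf{B}')$ that are observationally indistinguishable under every intervention in $\mathcal{S}$---is the same as the paper's, but the mechanism you propose for producing the indistinguishable pair does not go through. First, the dimension count on the linear map from $(\mathbf{A},\mathbf{B})$ to the tuple $\{\mathbf{D}_i : i\in\mathcal{S}\}$ cannot show that its kernel is nontrivial: the parameter space has roughly $\nn\nm+\binom{\nn}{2}$ dimensions while the observations comprise $|\mathcal{S}|$ matrices of size $\nn\times\nm$, so for $|\mathcal{S}|$ near $\nn-1$ the target space is far larger than the source, and injectivity is not excluded by counting. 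Second, your claim that the row of $\mathbf{B}$ corresponding to the un-intervened variable ``enters only through response matrices we do not observe'' is false: for every $j\in\mathcal{S}$ the matrix $\mathbf{D}_j=(\mathbf{I}-\mathbf{B}_j)^{-1}\mathbf{A}_j$ depends on all rows of $\mathbf{B}$ except the $j$-th, so the omitted variable's structural equation is probed by every observed intervention. Third, you misidentify the role of $m\le\nn-1$: it is not that $\mathbf{A}$ lacks enough columns to identify confounder edges (the paper's construction keeps $\mathbf{A}$ fixed and perturbs only $\mathbf{B}$); it is needed so that the data from each intervention, $\mathbf{X}_i=\mathbf{D}_i\mathbf{H}$, is confined to the at most $m$-dimensional column space of $\mathbf{D}_i$.

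The paper's actual argument is a subspace-annihilation argument: under $\max\{|\mathcal{S}|,m\}\le\nn-1$, the totality of data vectors $\{\mathbf{X}_i:i\in\mathcal{S}\}$ that can ever be observed lies in a proper subspace of $\mathbbm{R}^{\nn}$ of dimension at most $\nn-1$; one then chooses $\mathbf{B}'$ with $\mathrm{supp}(\mathbf{B}')\neq\mathrm{supp}(\mathbf{B})$ and $\dim\ker(\mathbf{B}-\mathbf{B}')=\nn-1$ containing that subspace, so that $(\mathbf{B}'_i-\mathbf{B}_i)\mathbf{X}_i=0$ and hence $\mathbf{X}_i=\mathbf{A}_i\mathbf{H}+\mathbf{B}'_i\mathbf{X}_i$ holds identically, making the two edge sets indistinguishable. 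Your instinct to verify agreement of the fourth-order moment tensors and to worry about PS indeterminacy is a legitimate point of rigor that the paper glosses over (since both models induce the same distribution on each $\mathbf{X}_i$, all accessible moments coincide), but it does not repair the kernel-counting step. To fix your proof you would need to replace the degrees-of-freedom count with the observation that $\mathbf{B}$ is only ever seen through its action on the span of the achievable data vectors---which is exactly the paper's key step.
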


\begin{remark}
Our results indicate that even with an SEM model, if $m\leq \nn-1$, then the worst-case recovery of causal effects still requires $\nn$ interventions. In contrast, for graphical causal models with perfect data and faithfulness assumption similar to our Assumption~\ref{ass:1}, the authors of~\cite{Eberhardt2006} (see Proposition 2) have shown that $\nn-1$ interventions, counting the null intervention of passive observations (with each randomizing a single variable) are necessary to determine the causal relations of $\nn$ variables. 
% The proof can be found in Appendix~\ref{app:proof_of_thm_necessary}. 
\end{remark} 

\vspace{-5pt}

% \begin{remark}
% The theorem above concludes our identifiability results. With the help of $\nn+1$ interventions, if the  response matrices $\mathbf{D}_0,\ldots,\mathbf{D}_{\nn}$ are correctly recovered, the theorem implies that the causal effects represented by the matrices $\mathbf{A}$ and $\mathbf{B}$ can recovered.
% \end{remark}

% In the remaining part of this paper, we state a bound on the number of samples required for recovering   $\mathbf{A}$ and $\mathbf{B}$. 
\section{Simulation}

We validate Algorithm~\ref{alg:1} by showing the trade-offs of the number of samples used and the accuracy of recovery. We generate each $\mathbf{H}_k$ i.i.d. according to a Laplacian distribution with mean $1$ for $k=1,\ldots,m$ and each $\mathbf{X}_{i}$ i.i.d. according to a Gaussian distribution with mean $0$ and variance $1$ for $i=1,\ldots,\nn$. The noise vector $\mathbf{N}$ has i.i.d. Gaussian entries with mean $0$ and variance $10^{-3}$. Figure~\ref{fig:sample_complexitiy} depicts the sample complexity results for recovering the observed causal matrix. In particular, we generate data from a linear system with different numbers of observables and hidden variables. We restart the optimization 20 times to find the best optimization error. We plot the best recovery error on $\mathbf{B}$ using Algorithm \ref{alg:1} on different number of variables when there are same number of hidden variables and the observables (3 variables: Figure \ref{fig:sample_complexitiy} Left, 4 variables: Figure \ref{fig:sample_complexitiy} Middle, 5 variables: Figure \ref{fig:sample_complexitiy} Right, 10 variables: Figure \ref{fig:ablation} Left). We can see that the estimation error on $\mathbf{B}$ decrease fast with more samples. We also see that the recovery error increases with the number of the observables and hidden variables.
\begin{figure}[t]
 \setlength{\tabcolsep}{-5pt}
\begin{tabular}{ccc}
      \includegraphics[height=3.5cm]{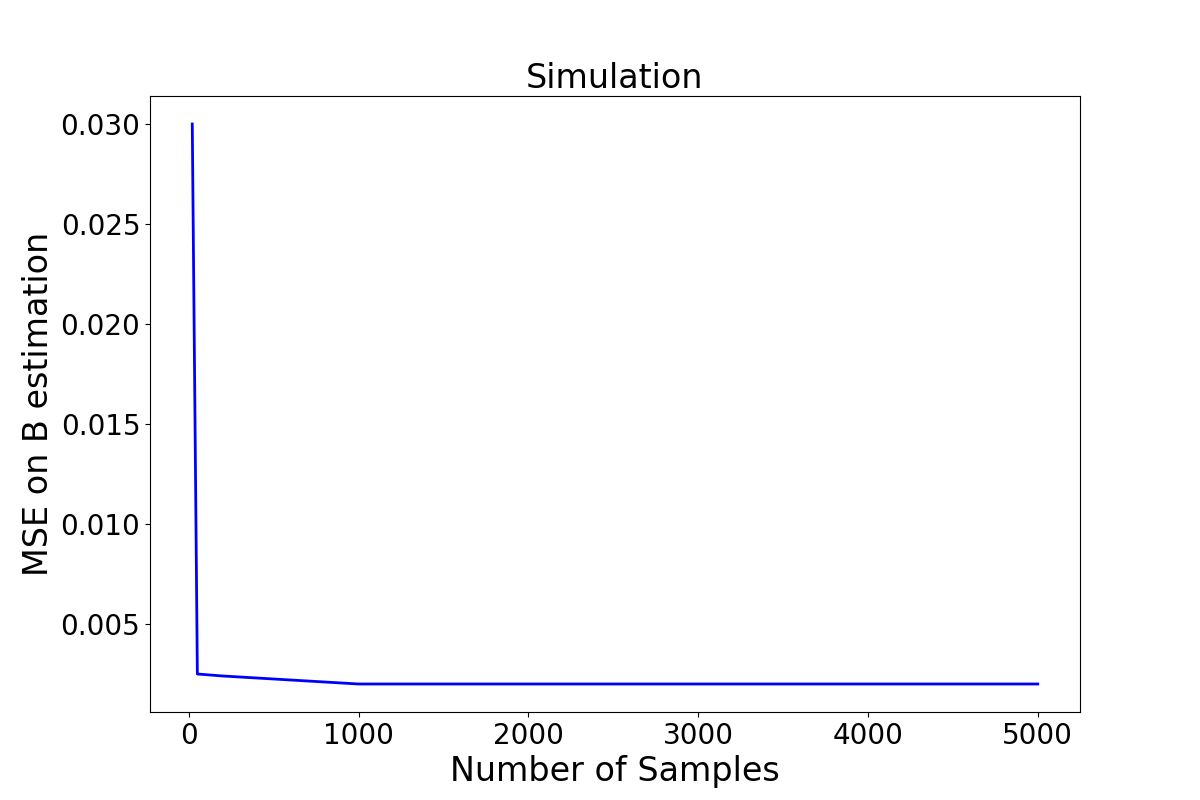}&
      \includegraphics[height=3.5cm]{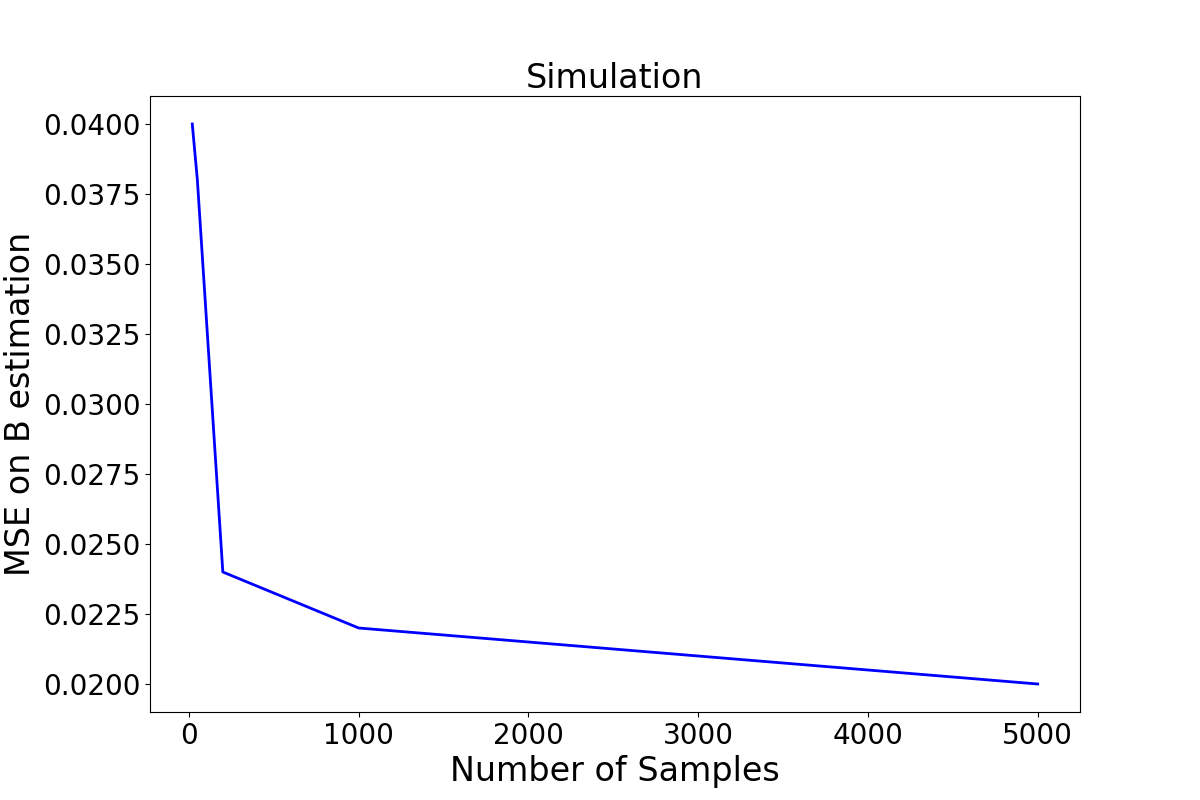}&
      \includegraphics[height=3.5cm, trim={1cm, 0cm, 0cm, 0cm},clip]{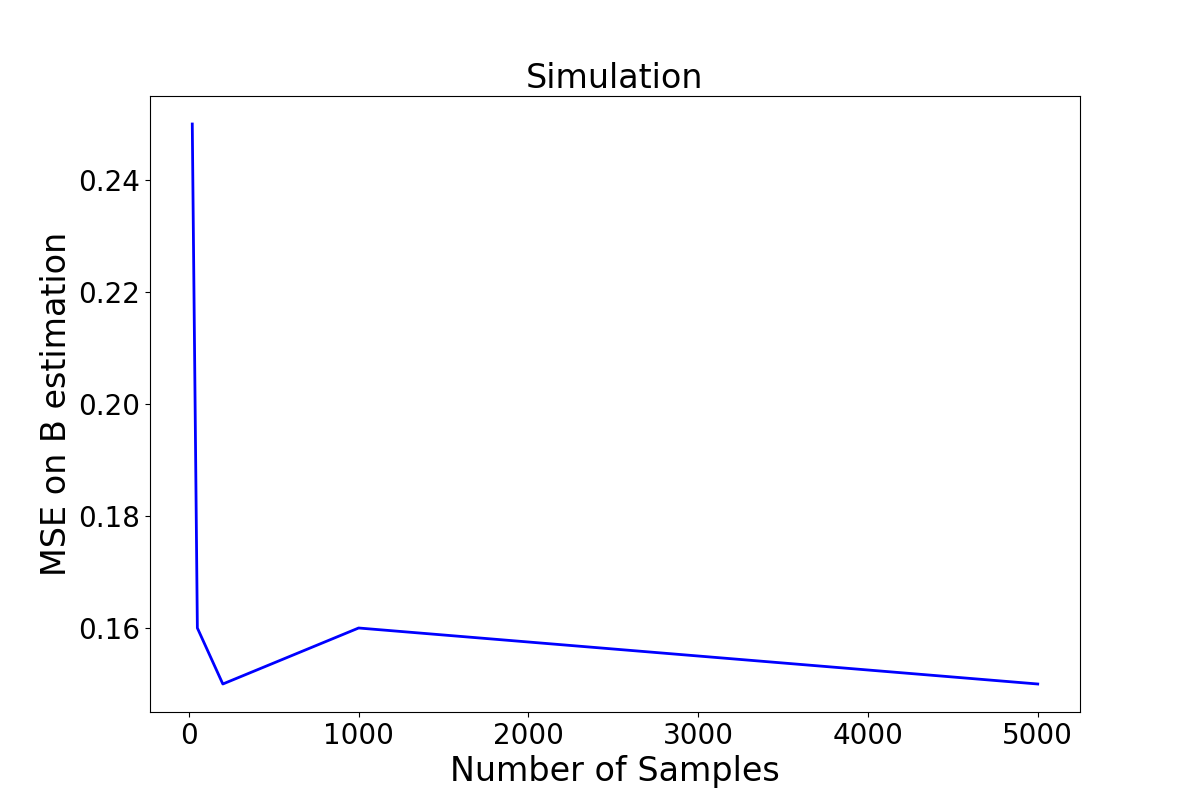}
    \end{tabular}
     \caption{Simulation results in 3 observable and 3 hidden ({\bf Left}), 4 observable and 4 hidden ({\bf Middle}) and 5 observable and 5 hidden ({\bf Right}). Trade-offs of the number of samples $N$ and Mean square error on $\mathbf{B}$. Note here the total number of samples are $(n+1)\times N$. Within the same mode, the error decrease fast with more samples. The recovery error increases with the number of the observables and hidden variables.}
    \label{fig:sample_complexitiy}
\end{figure}
\subsection{Ablation Study}
We conduct an ablation study on misspecified number of hidden variables and less number of interventions to see how robust our method is. Figure \ref{fig:ablation}(Middle) demonstrate that we can use less number of interventions to achieve similar empirical results. This indicates we only need small number of experiments instead of intervening on all the variables in practice. Figure \ref{fig:ablation} (Right) shows that when our model, in particular the number of latent variables, are slightly misspecified, our results are still quite robust. 
\begin{figure}[h]
 \setlength{\tabcolsep}{-5pt}
 \centering
\begin{tabular}{ccc}
      \includegraphics[height=3.5cm]{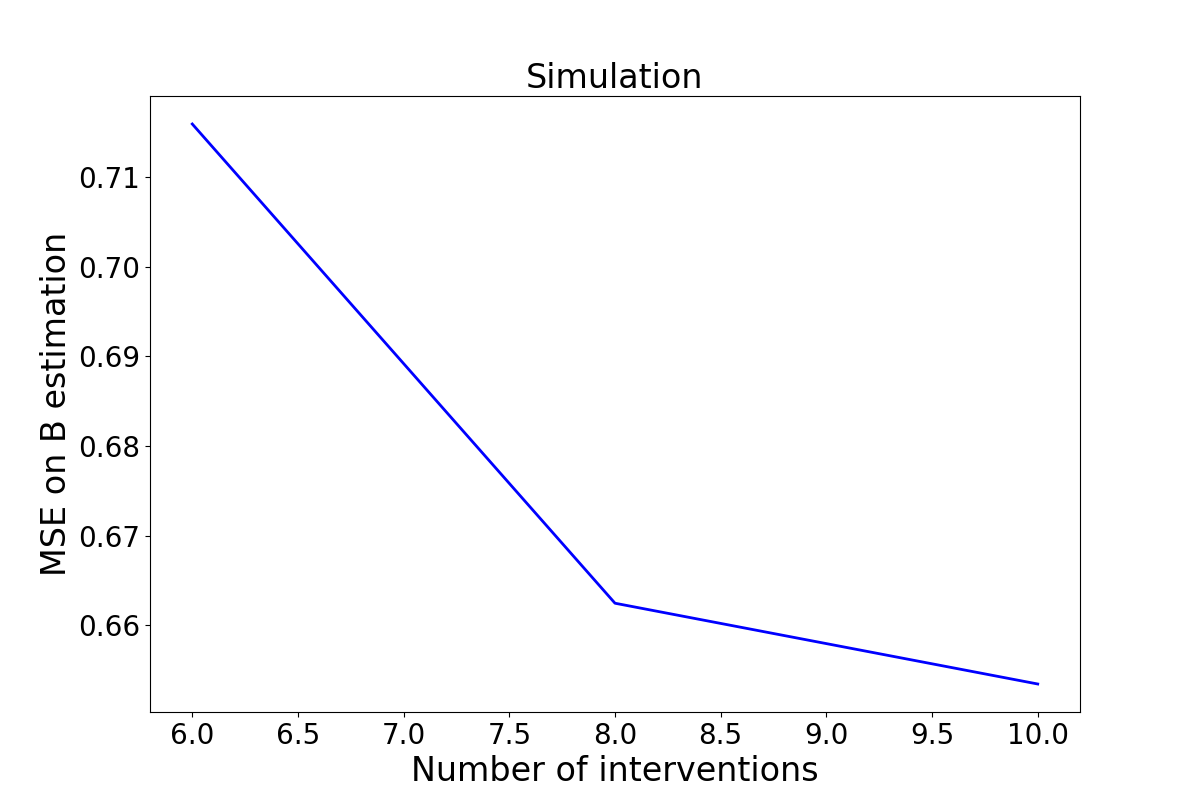}&
      \includegraphics[height=3.5cm]{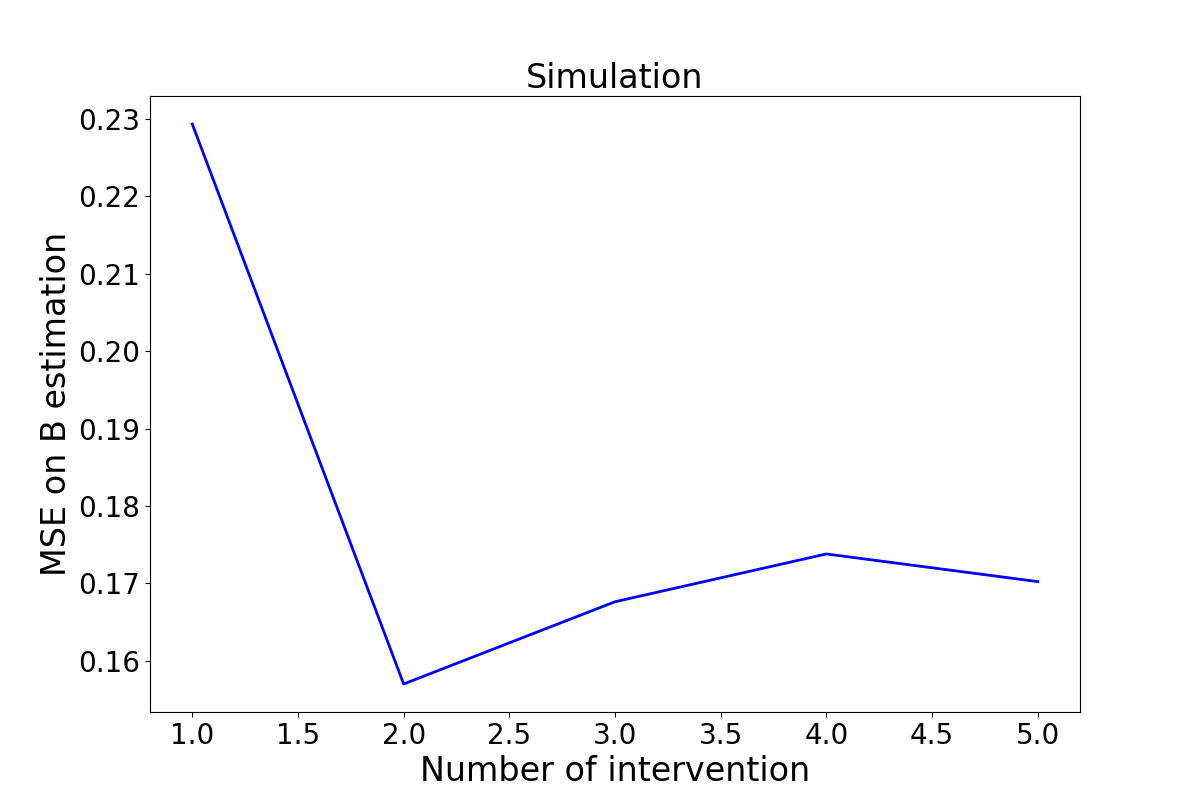}&
      \includegraphics[height=3.5cm]{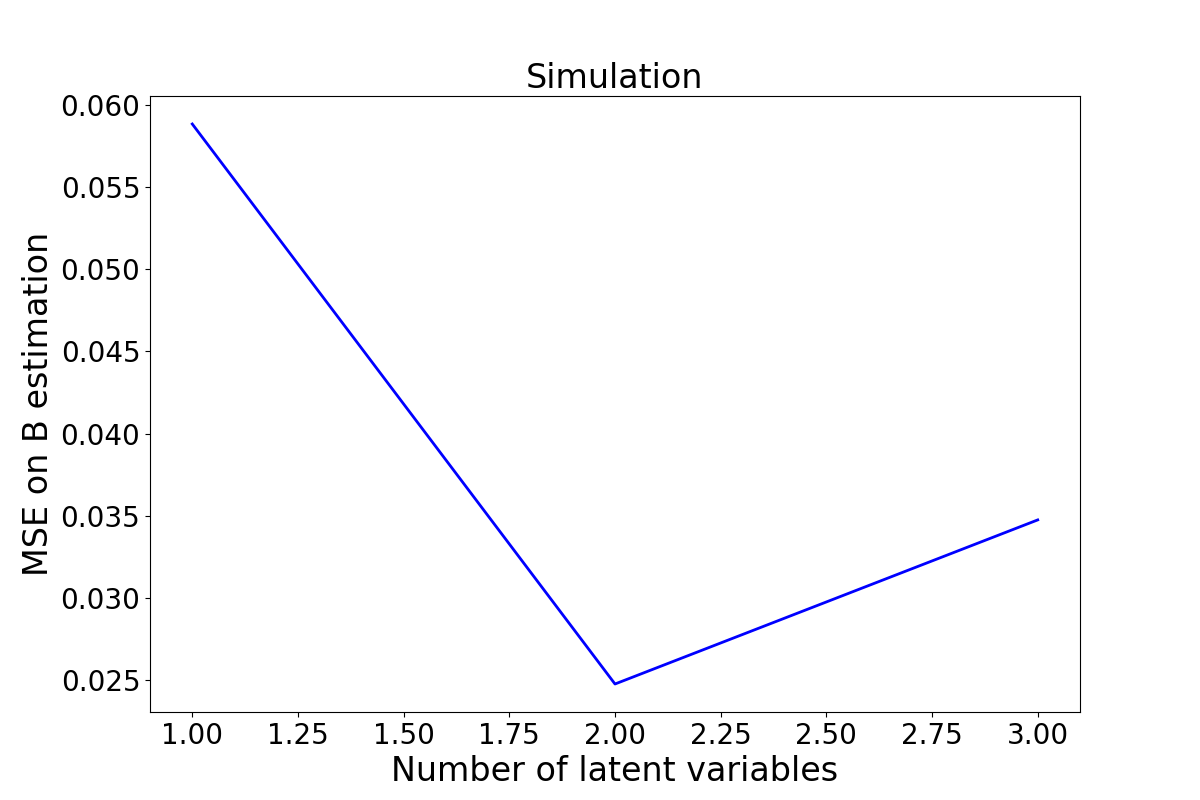}
    \end{tabular}
     \caption{ {\bf Left}: Simulation result on 10 observation variables with different numbers of interventions. {\bf Middle}: Performance with increasing number of interventions with a 4-variable model. We can observe that we can use less intervention to achieve similar empirical results with full interventions. {\bf Right}: Performance with misspecified number of latent variables on a 3-variable model. We can see that the model is relatively robust to small perturbations on number of latent variables.}
    \label{fig:ablation}
    \vspace{-0.3cm}
\end{figure}

% to eliminate the ambiguity caused by scaling and permutation, we focus on the error defined below:
% \begin{align*}
%     \varepsilon:=\mathbbm{E}\left|\left|\mathbf{X}-\mathbf{A}\mathbf{H}-\mathbf{B}\mathbf{X}-\mathbf{N}\right|\right|_{2}
% \end{align*}
% where the expectation is taken over the randomness of $\mathbf{X}$, $\mathbf{H}$ and $\mathbf{N}$. 
% Note that $\mathbf{A}$ is permuted and having a small MSE on $\mathbf{B}$ 
% implies the recovered $\mathbf{A}$ and $\mathbf{B}$ satisfy the linear system in~\eqref{eq:1}. Our simulation demonstrates that the true mixing and observed causal matrices can be recovered using our method.

\section{Related Work}
\label{sec:related}
% {\color{red} Most closely related work: kun zhang's work  \\
% https://arxiv.org/abs/1908.03932  \\
% https://arxiv.org/abs/1908.03932 \\
% } \\
{\bf Causal discovery without interventions: }
Traditionally, if the causal sufficiency assumption holds, \textit{i.e.,} no unobserved common causes exist for any pair of variables that are under consideration in the model, causal structure learning algorithms such as PC ~\cite{spirtes2000causation} and IC~\cite{Pearl2009} can identify a class of causal models that are equivalent to each other in terms of the conditional independence. Given more background knowledge, the models that fit the observed data can be narrowed down. 
% There are even cases when causal relation can be uniquely indentified~\cite{shimizu2006linear,shimizu2011directlingam} under the assumption that non-Gaussian distributions for the exogenous noise in the linear SEMs. 
In more general structure learning, assumptions about structures such as trees and polytress are made to learn from observational data \cite{sepehr2019algorithm,huangguaranteed}.
In this paper, we focus on modeling possible unobserved common causes, \textit{i.e.,} latent confounders, and establish a unique identification method using experimental data.  Utilizing hidden variables not only leads to a more descriptive representation of the model but also relaxes the computational complexity of many algorithms through dimensionality reduction. 
% There is a common concern in parameter learning for latent variable models is that there can be many (sometimes infinite) possible models that fit the same observational data due to unobserved effects. Thus, it is necessary to make some assumptions about hidden variables or the structure of the graphical model to make the identification process feasible \cite{pmlr-v28-anandkumar13}. 
% The closest work to this work is by Salehkaleybar et.al \cite{1908.03932}. They study the case of learning linear non-Gaussian causal models in the presence of latent variables. But they only use the observational data and they do not utilize the structure of the confounders to do causal discovery. 

{\bf Causal discovery using interventions:}
Studying causal discovery using interventions has a long history. There is a large body of work on experimental design for casual discovery based on combinatorics and search~\cite{eberhardt2007causation,eberhardt2007interventions, hyttinen2014constraint}. Representing the confounding effects implicitly and allowing multi-variable interventions, a linear cyclic causal model can be identified with presence of latent confounders~\cite{hyttinen2012learning}. Recent work focuses on developing efficient randomized algorithm to learn a causal graph with confounding variables in a minimal or constrained budget \cite{kocaoglu2017experimental,acharya2018learning,ghassami2017budgeted,ghassami2019interventional,hauser2012characterization,eaton2007exact,rothenhausler2015backshift}. Combined with active learning and optimization theory, interventions can be regarded as active data queries with different cost \cite{he2008active,addanki2020efficient}.
In our case, we assume the invariance of latent confounders to relate mixing matrices across interventions. We restrict to the case of single-variable intervention and learn the parameters of the mixing and causal matrices explicitly.

% {\bf Independent Component Analysis (ICA):} ICA Modeling has received much attention in the past two decades \cite{Hyvarinen:2000:ICA:351654.351659},\cite{book}. It is even possible to reduce the directed acyclic graph of a causal model to the bipartite graph of an ICA model \cite{pmlr-v28-anandkumar13}. Many techniques have been proposed to learn the parameter of an ICA model such as maximum likelihood estimation, minimization of mutual information, tensorial method, etc. \cite{JMLR:v15:anandkumar14b}, \cite{book}. In this paper, we use the alternating power updates tensor decomposition method presented in \cite{JMLR:v15:anandkumar14b} that is efficient and has guaranteed performance.

{\bf Causal inference and discovery with latent variable models:} Previous research on causal learning and latent variable models focus on the recovering the causal order \cite{tashiro2014parcelingam}. Lingam model has also been generalized to hidden variable cases \cite{hoyer2008estimation} and latent Gaussian confounders \cite{chen2013causality}. Recent study of causal inference using latent variable models \cite{wang2019blessings} assumes the conditional independence between observable variables given the latent confounders. It opens up a line of research about utilizing latent variables in regression problems for estimating the potential outcome. In contrast to that, our problem setting is a causal discovery one, where we recover both the causal relationships between observable variables together with the confounding effects.

\section{Conclusion and Discussion}
\label{sec:conclusion}
In this work, we propose a novel approach to discover causal relation from experimental data based on recovering mixing and causal matrices in the latent linear structural equation model. We provide theoretical analysis regarding the conditions of unique identification and sample complexity for the method. The effectiveness of our approach is validated by experiments.
An interesting extension of the methods is to consider non-linear models including deep neural networks, where invariance of a mixing matrix $\mathbf{A}$ does not hold in general but need to be enforced by constraints.
% We are also interested in relaxing the assumption of performing an intervention on each observable variable. By actively choosing the next intervention based on the performance from the last one, we expect to reduce the number of interventions and apply our method to more practical problems. 
% Therefore, this work also paves the way to study off-policy learning and decision making in causal discovery. In reality, interventions are costly, making evaluating and optimizing interventions for more efficient causal discovery a promising future direction.  
% Another line of future work is to relax our assumption that latent variables are invariant during interventions. In this case, the model is augmented with an additional equation: $h = PX$, which captures how observable variables affect the hidden ones. This paper corresponds to the case when $P = O$. Our analysis and algorithm can be extended to more general cases after mild modification. We leave that to future research. Below we present the academic and societal impacts of our work.

% {\bf Potential applications.}  
% {\bf In the small data region.} Latent variable extraction can be conducted in various ways. We propose to use method of moments when facing large-scale data. However, in the small data region, tensor decomposition can suffer from noises and errors. We could resort to other methods like joint factorization. Our general framework is compatible with different latent variable methods.

\newpage

\begin{small}
\bibliography{bib}
\bibliographystyle{unsrt}
\end{small}

\appendix
\newpage

\section{Proof of Theorem~\ref{thm:1}}
\label{append_a}

\begin{proof}

Given $(\mathbf{I}-\mathbf{B})\mathbf{C}=\mathbf{A}$ and the linear equations obtained via $n$ interventions $\mathtt{Do}$($X_i = \lambda_i$) for $i=1,\ldots,\nn$:
\begin{align}
\label{app:4.1}
 (\mathbf{I}-\mathbf{B}_i)\mathbf{D}_i = \mathbf{A}_i,
\end{align}
we show that the matrices $\mathbf{A}$ and $\mathbf{B}$ are unique up to permutation and scaling.

Subtracting both sides of (\ref{app:4.1}) from $(\mathbf{I}-\mathbf{B})\mathbf{C}=\mathbf{A}$, for $i=1,\ldots,\nn$,
\begin{align*}
(\mathbf{I}-\mathbf{B})\mathbf{C} - (\mathbf{I}-\mathbf{B}_i)\mathbf{D}_i = \mathbf{A} - \mathbf{A}_i,
\end{align*}
which leads to
\begin{align*}
\mathbf{C}-\mathbf{D}_i =& \mathbf{A}-\mathbf{A}_i + \left(\mathbf{B}\mathbf{C}-\mathbf{B}_i \mathbf{D}_i\right)\\
=& \mathbf{A}-\mathbf{A}_i + \left(\mathbf{B}\mathbf{C}-\mathbf{B}\mathbf{D}_i+\mathbf{B}\mathbf{D}_i-\mathbf{B}_i\mathbf{D}_i\right).
\end{align*}
Rearranging the terms we obtain
\begin{align}
\label{eq:a.1}
\mathbf{C}-\mathbf{D}_i =& (\mathbf{I}-\mathbf{B})^{-1}\underbrace{\left[\left(\mathbf{A}-\mathbf{A}_i\right)+ (\mathbf{B}-\mathbf{B}_i)\mathbf{D}_i\right]}_{=:\mathbf{\underline{A}}_i}.
\end{align}
By the definitions of $\mathbf{A}_i$ and $\mathbf{B}_i$, the $\nn\times \nm$ matrix $\mathbf{A}-\mathbf{A}_i+ (\mathbf{B}-\mathbf{B}_i)\mathbf{D}_i$ is rank-$1$. The inverse matrix $(\mathbf{I}-\mathbf{B})^{-1}$ is full-rank and therefore, $\mathrm{rank}\left(\mathbf{C}-\mathbf{D}_i\right)=1$ for all $i=1,\ldots,\nn$. 
Denote by $\mathbf{G}:= (\mathbf{I}-\mathbf{B})^{-1}$ and let ${\mathbf{g}}^{(i)}$ be the $i$-th column of ${\mathbf{G}}$ and $\mathbf{a}_i$ be the $i$-th row of $\mathbf{\underline{A}}_i$. It follows that
$\mathbf{C}-\mathbf{D}_i = {\mathbf{g}}^{(i)} \mathbf{a}_i^{T}$. The vectors  ${\mathbf{g}}^{(i)}$ and $\mathbf{a}_i$ can be recovered using the singular value decomposition (SVD). To see the uniqueness, note that the recovered $\mathbf{A}'$ and $\mathbf{B}'$ need to satisfy~\eqref{app:4.1} for all $i=1,\ldots,\nn$ and $(\mathbf{I}-\mathbf{B})\mathbf{C}=\mathbf{A}$. To satisfy $(\mathbf{I}-\mathbf{B}')\mathbf{C}=\mathbf{A}'$, there must exist an orthogonal matrix $\mathbf{U}$ such that $\mathbf{A}'=\mathbf{A}\mathbf{U}$ and $(\mathbf{I}-\mathbf{B}')^{-1} = \mathbf{U}^{-1}(\mathbf{I}-\mathbf{B})^{-1}$, equivalently, $\mathbf{B}'=\mathbf{B}\mathbf{U}$ where $\mathbf{A}$ and $\mathbf{B}$ are the true matrices. However, it is not possible that $\mathbf{A}'_i=\mathbf{A}_i\mathbf{U}$ and $\mathbf{B}'_i=\mathbf{B}_i\mathbf{U}$ for every $i=1,\ldots,\nn$.
\end{proof}

\section{Proof of Theorem~\ref{thm_2}}
\label{append_b}

\begin{proof}
Consider the recovery of $\mathbf{D}_0,\ldots,\mathbf{D}_\nn$ using tensor decomposition described in Section~\ref{sec:tensor}. Assuming the conditions (1)-(5) presented above, Theorem $7$ in~\cite{anandkumar2014sample} guarantees that for all $i=0,\ldots,\nn$ with high probability, for all $j=1,\ldots,\nm$,
\begin{align}
\label{eq:5.3}
\mathrm{dist}\left(\mathbf{d}_{i,j},\widehat{\mathbf{d}}_{i,j}\right)\leq \varepsilon\left(\nm,\nn,N\right)
\end{align}
where for any two vectors $\mathbf{x},\mathbf{y}\in \mathbbm{R}^\nn$, the distance between them is defined as $$\underset{\mathbf{z}\perp \mathbf{y}}{\sup} {{\left\langle\mathbf{z},\mathbf{x}\right\rangle}/{(\|\mathbf{z}\|\cdot\|\mathbf{x}\|)}},$$
$\mathbf{d}_{i,j}$ represents the $j$-th column of the matrix $\mathbf{D}_i$ and the error term $\varepsilon\left(\nm,\nn,N\right)$ is defined as
\begin{align}
\label{eq:5.5}
\varepsilon\left(\nm,\nn,N\right):=\widetilde{O}\left(\frac{\nm^2}{\min\left\{N,\sqrt{\nn^3 N}\right\}}\right)+\widetilde{O}\left(\frac{\sqrt{\nm}}{\nn^{3/2}}\right).
\end{align}
Condition (6) guarantees that for $j=1,\ldots,\nm$,
\begin{align}
\label{eq:5.2}
\left|\left|\mathbf{d}_{i,j} - \widehat{\mathbf{d}}_{i,j} \right|\right|_{2}\leq \sqrt{2}\mathrm{dist}\left(\mathbf{d}_{i,j},\widehat{\mathbf{d}}_{i,j}\right).
\end{align}
Thus, denote by $\mathbf{F}_i:=\mathbf{D}-\mathbf{D}_i$ and $\widehat{\mathbf{F}}_i:=\widehat{\mathbf{D}}-\widehat{\mathbf{D}}_i$, the rank-$1$ matrices in the linear systems presented in (\ref{eq:a.1}). Tensor decomposition gives
\begin{align}
\nonumber
\left|\left|\mathbf{f}_{i,j} - \widehat{\mathbf{f}}_{i,j} \right|\right|_{2} = & \left|\left|{\mathbf{d}}_{0,j} - {\mathbf{d}}_{i,j} - \left(\widehat{\mathbf{d}}_{0,j} - \widehat{\mathbf{d}}_{i,j}  \right)\right|\right|_{2}\\
\label{eq:5.1}
\leq & \left|\left|{\mathbf{d}}_{0,j} - \widehat{\mathbf{d}}_{0,j} \right|\right|_{2} + \left|\left| \left({\mathbf{d}}_{i,j} - \widehat{\mathbf{d}}_{i,j}  \right)\right|\right|_{2}
\end{align}
where we have used the triangle inequality to derive (\ref{eq:5.1}). Plugging in (\ref{eq:5.2}) and (\ref{eq:5.3}) into  (\ref{eq:5.1}), it follows that for all $i=0,\ldots,\nn$ with high probability, for all $j=1,\ldots,\nm$,
\begin{align}
    \nonumber
    \left|\left|\mathbf{f}_{i,j} - \widehat{\mathbf{f}}_{i,j} \right|\right|_{2}\leq 2\sqrt{2}\mathrm{dist}\left(\mathbf{d}_{i,j},\widehat{\mathbf{d}}_{i,j}\right)\leq 2\sqrt{2}\varepsilon\left(\nm,\nn,N\right).
\end{align}
Therefore, for each matrix $\mathbf{F}_i$, with high probability, we can bound the $\ell_\infty$-norm by its Frobenius norm from above as
\begin{align*}
    % &\left|\left|\mathbf{F}_{i} - \widehat{\mathbf{F}}_{i} \right|\right|_{1} \leq    \sqrt{\nm}\left|\left|\mathbf{F}_{i} - \widehat{\mathbf{F}}_{i} \right|\right|_{F}\leq \sqrt{\nn\nm}\varepsilon\left(\nm,\nn,N\right),\\
    &\left|\left|\mathbf{F}_{i} - \widehat{\mathbf{F}}_{i} \right|\right|_{\infty} \leq    \sqrt{\nm}\left|\left|\mathbf{F}_{i} - \widehat{\mathbf{F}}_{i} \right|\right|_{F}\leq \sqrt{\nn\nm}\varepsilon\left(\nm,\nn,N\right).
\end{align*}
Since $\mathbf{F}_i$ is rank-$1$, there is only one positive singular value, denoted by $\sigma_i>0$ and $\mathbf{F}_i$ can be written as $\mathbf{F}_i=\sigma_i\mathbf{u}_i\mathbf{v}_i^{T}$. Applying the perturbation bound for SVD in Theorem 1.1 of \cite{fan2018eigenvector}, for all $i=0,\ldots,\nn$ with high probability, 
\begin{align*}
   \left|\left|\mathbf{a}_{i} - \widehat{\mathbf{a}}_{i}\right|\right|_{\infty}\leq& O(\left(\max\{\nn||\mathbf{u}||_1,\nm||\mathbf{v}||_1\}\right)^2)\\
   &\cdot \frac{\max\{\nn,\nm\}}{\sigma_i\sqrt{\nm}}\varepsilon\left(\nm,\nn,N\right),\\
   \left|\left|\mathbf{g}_{i} - \widehat{\mathbf{g}}_{i}\right|\right|_{\infty}\leq& O(\left(\max\{\nn||\mathbf{u}||_1,\nm||\mathbf{v}||_1\}\right)^2)\\
   &\cdot \frac{\max\{\nn,\nm\}}{\sigma_i\sqrt{\nn}}\varepsilon\left(\nm,\nn,N\right),
\end{align*}
yielding the desired results with $\varepsilon\left(\nm,\nn,N\right)$ defined in (\ref{eq:5.5}).
\end{proof}

\section{Proof of Theorem~\ref{thm:necessary}}
\label{app:proof_of_thm_necessary}
\begin{proof}[Proof of Theorem~\ref{thm:necessary}]

Suppose $G=(\mathcal{X}\cup\mathcal{H},E)$ is an arbitrary graph with a mixing matrix $\mathbf{A}$ and a causal matrix $\mathbf{B}$. For any $i\in\mathcal{S}$ with $|\mathcal{S}|\leq \nn-1$, the intervened SEM model in~\eqref{eq:intervened_system} is
\begin{align}
\label{eq:SEM_1}
\mathbf{X}_i = & \mathbf{A}_i \mathbf{H} + \mathbf{B}_i\mathbf{X}_i.
\end{align}
We show that there exists another graph $G'\neq G$ with a mixing matrix $\mathbf{A}$ and a causal matrix $\mathbf{B}'$ such that $G'$ is consistent with the SEM in~\eqref{eq:SEM_1}. The two matrices $\mathbf{B},\mathbf{B}'\in\mathbbm{R}^{\nn\times\nn}$ satisfy that $\mathrm{supp}(\mathbf{B})\neq\mathrm{supp}\left(\mathbf{B}'\right)$ and  $\dim\left(\mathrm{kernel}(\mathbf{B}-\mathbf{B}')\right)=\nn-1$. Moreover, $\mathbf{X}_i=(\mathbf{I}-\mathbf{B}_i)^{-1}\mathbf{A}_i\mathbf{H}$ implies $\dim\left(\mathrm{span}(\{\mathbf{X}_i:i\in\mathcal{S}\})\right)\leq\max\{|\mathcal{S}|,m\}$. Therefore $\max\{|\mathcal{S}|,m\}\leq\nn-1$ ensures that
$\mathbf{X}_i\in\mathrm{kernel}(\mathbf{B}-\mathbf{B}')$ for all $i\in\mathcal{S}$ with $|\mathcal{S}|\leq \nn-1$. Hence,
\begin{align*}
\mathbf{X}_i = & \mathbf{A}_i \mathbf{H} + \mathbf{B}_i\mathbf{X}_i + \left(\mathbf{B}'_i-\mathbf{B}_i\right)\mathbf{X}_i\\
=&\mathbf{A}_i \mathbf{H} + \mathbf{B}'_i\mathbf{X}_i,
\end{align*}
yielding that at least $\nn$ interventions (including the null intervention) are necessary.

\end{proof}

\end{document}